\documentclass{article}
\usepackage{amssymb}
\usepackage{enumitem}
\usepackage{amsthm}
\usepackage{amsmath}

\usepackage{bm}
\usepackage{xcolor}

\usepackage{tikz}
\usetikzlibrary{shapes,arrows.meta,positioning,calc,decorations.pathreplacing,
                patterns,backgrounds,fit,arrows}

\newcommand{\Uepi}{U_{\mathrm{epi}}}
\newcommand{\Uale}{U_{\mathrm{ale}}}

\usepackage{microtype}
\usepackage{graphicx}
\usepackage{subcaption}
\usepackage{multirow}
\usepackage[table]{xcolor}
\usepackage{adjustbox}
\usepackage{booktabs} 
\usepackage{hyperref}
\usepackage{algorithm}
\usepackage{algorithmic}

 \usepackage[main, final]{neurips_2026}

\usepackage{makecell}
\usepackage{amsmath}
\usepackage{amssymb}
\usepackage{mathtools}
\usepackage{amsthm}
\usepackage{microtype}  
\usepackage{csquotes}
\usepackage{natbib}

\usepackage[capitalize,noabbrev]{cleveref}

\usepackage{pgfplots}
\pgfplotsset{compat=1.17}
\usepackage{amsmath,amssymb}
\usepackage{xcolor}

\definecolor{epistemic}{RGB}{102, 166, 30}     
\definecolor{aleatoric}{RGB}{228, 134, 34}     
\definecolor{baseline}{RGB}{204, 102, 102}     
\definecolor{ours}{RGB}{148, 103, 189}         
\definecolor{target}{RGB}{200, 230, 200}       
\definecolor{encoder}{RGB}{100, 149, 237}      

\definecolor{credalrow}{rgb}{0.996, 0.957, 0.8}  
\definecolor{baselinerow}{RGB}{245, 245, 245}  
\definecolor{oursrow}{RGB}{227, 242, 253}      
\definecolor{bestcell}{RGB}{200, 230, 201}     

\newcommand{\rPdz}[1]{#1}
\newcommand{\vcGN}[1]{#1}
\newcommand{\rAxY}[1]{#1}

\newtheorem{theorem}{Theorem}[section]

\newtheorem{lemma}[theorem]{Lemma}
\newtheorem{corollary}[theorem]{Corollary}
\newtheorem{definition}[theorem]{Definition}

\usepackage[textsize=tiny]{todonotes}
\usepackage{tikz}
\usetikzlibrary{positioning, arrows.meta, shapes.geometric}
\title{Structural Separation of Epistemic and Aleatoric Uncertainty across Supervised Latent Variable Models}

\author{
Tanmoy Mukherjee\textsuperscript{1}, 
Marius Kloft,\textsuperscript{2}, 
Pierre Marquis\textsuperscript{1,3}, 
Zied Bouraoui\textsuperscript{1} 
\\
\textsuperscript{1} Univ. Artois, CNRS, CRIL, France, \textsuperscript{2} RPTU Kaiserslautern-Landau\\
\textsuperscript{3} Institut Universitaire de France
\\
\texttt{mukherjee@cril.fr}\\
}

\begin{document}

\maketitle

\begin{abstract}
Predictive uncertainty is commonly decomposed into epistemic and aleatoric components that are useful if they support different decisions:epistemic uncertainty should indicate reducible model ignorance, whereas aleatoric uncertainty should indicate persistent ambiguity, suggesting review, abstention, or ambiguity-aware prediction.
Standard decompositions often produce strongly correlated estimates because both quantities are derived from the same predictive distribution. We study a design principle, \emph{structural separation}, which assigns epistemic and aleatoric uncertainty to disjoint parameter paths trained with distinct supervision targets: reducible prediction error for epistemic uncertainty and persistent label ambiguity for aleatoric uncertainty. We instantiate this principle across Credal Concept Bottleneck Mode and show a gradient-isolation result indicating that the two uncertainty heads are not coupled through shared training gradients under the proposed parameterization. Across five ambiguity-aware benchmarks, structural separation substantially reduces epistemic-aleatoric correlation while preserving predictive performance. Further analysis shows that aleatoric estimates track annotator- or corpus-derived ambiguity, while epistemic estimates are more sensitive to prediction error and data availability. These results suggest that supervised latent-variable architectures provide a practical route toward uncertainty estimates that are not merely decorrelated but operationally distinguishable.
\end{abstract}

\section{Introduction}
\label{sec:introduction}
Uncertainty estimates are useful only insofar as they support different decisions. Epistemic uncertainty (EU) is usually interpreted as reducible model ignorance: it should be high where the model lacks evidence and decrease as relevant data are observed. Aleatoric uncertainty (AU), by contrast, reflects persistent ambiguity in the data or annotation: it can remain high even for a well-trained model when the input admits multiple plausible labels. Operationally, high EU calls for more evidence or model improvement, whereas high AU calls for abstention, ambiguity-aware prediction, or human review.

Despite this conceptual distinction, standard predictive uncertainty methods often fail to separate the two quantities empirically. Monte Carlo dropout~\citep{gal2016dropout}, deep ensembles~\citep{lakshminarayanan2017simple}, evidential networks~\citep{sensoy2018evidential}, and related methods derive both estimates from the same predictive object. Recent benchmarks report that the resulting components are often strongly correlated across datasets~\citep{mucsanyi2024benchmarking}. This is not merely an implementation artifact: when both quantities are functions of the same predictive distribution, they inherit a shared algebraic dependency. Recent formal results show that different underlying epistemic and aleatoric situations can induce the same optimal predictive distribution, making them indistinguishable to any decomposition that only observes that distribution~\citep{tomov2025illusion}. Thus, even accurate predictors may fail to provide uncertainty estimates that support different decisions (see Appendix~\ref{sec:related} for a fuller discussion of related work).

We study an alternative design principle: \emph{structural separation} (Fig.~\ref{fig:impossibility}). Rather than deriving EU and AU post hoc from the same predictive distribution, we assign them to distinct supervised latent channels. EU is trained to track reducible prediction error, while AU is trained to track persistent label ambiguity, such as annotator disagreement or corpus-derived answer distributions. The goal is not to claim that architectural separation alone guaranties semantic correctness, but to remove a source of algebraic coupling and then test whether the resulting channels behave differently under epistemic and aleatoric diagnostics. Fig.~\ref{fig:impossibility} illustrates the contrast: predictive decompositions constrain both signals through $p(y\mid x)$, whereas structural separation allows them to vary independently by assigning them to distinct supervised paths.

We instantiate this principle in \emph{supervised latent-variable models} (SLVMs): architectures with an intermediate layer of individually supervised units, such as concept bottleneck models (CBMs)~\citep{koh2020concept}, self-explaining neural networks (SENNs)~\citep{alvarez2018towards}, and prototype-based models~\citep{chen2019looks}. These architectures are natural candidates for structural separation because their intermediate variables can be supervised, inspected, and routed through separate parameter paths. Our primary instantiation is a Credal CBM in which EU and AU estimates are produced by disjoint heads trained with distinct losses.
Because neither uncertainty estimate is constrained to be a function of $p(y\mid x)$ alone, the construction avoids the specific setting targeted by the impossibility result of \citet{tomov2025illusion}. Avoiding this post-hoc setting is not sufficient on its own: gradient-isolated heads can still produce statistically correlated outputs or decorrelated outputs that are semantically uninformative. We therefore test three claims separately: (i) \emph{architectural}, through gradient isolation; (ii) \emph{statistical}, through low EU/AU correlation; and (iii) \emph{semantic}, through the alignment of each head with its intended validation target.

Our contributions are fourfold. First, we study structural separation for SLVMs, separating EU and AU into distinct supervised parameter paths rather than deriving both from the same predictive distribution. Second, we instantiate this principle within a Credal CBM and show a gradient-isolation result: with disjoint parameterization and separate loss terms, the EU and AU heads remain uncoupled, as they do not share training gradients. Third, we introduce diagnostics for evaluating whether separation is meaningful, including gradient-isolation probes, EU/AU correlation, data-scaling reducibility, and ambiguity-tracking analyzes. Finally, we verify a cross-architecture and cross-benchmark evaluation showing that structural separation reduces EU/AU coupling while preserving predictive performance and producing uncertainty signals aligned with distinct validation targets.

\section{Background and Problem Setup}
\label{sec:background}

We introduce the uncertainty decomposition, explain why predictive-distribution-based decompositions are structurally coupled, and define the class of SLVMs used for structural separation. 

\noindent\textbf{Notation} 
We denote inputs as $x \in \mathcal{X}$, task labels as $y \in \{1,\ldots,J\}$,
and concept labels as $c \in \{1,\ldots,K\}^C$ for $C$ concepts with $K$
classes each (e.g., $K=3$ for negative/unknown/positive).  We write $\Delta^{K-1}$ for the $(K-1)$-simplex
of probability distributions. A key distinction: $p(c|x)$ denotes the model's
predicted distribution over concepts, while $p^*(c|x)$ denotes the true
(unknown) distribution reflecting genuine label ambiguity---when multiple
annotators disagree, $p^*$ has high entropy. For uncertainty, $U_{\mathrm{epi}}$
denotes epistemic uncertainty (model ignorance about $p^*$) and
$U_{\mathrm{ale}}$ denotes aleatoric uncertainty (entropy of $p^*$ itself).

\begin{figure*}[t]
\centering
\includegraphics[width=0.9\textwidth]{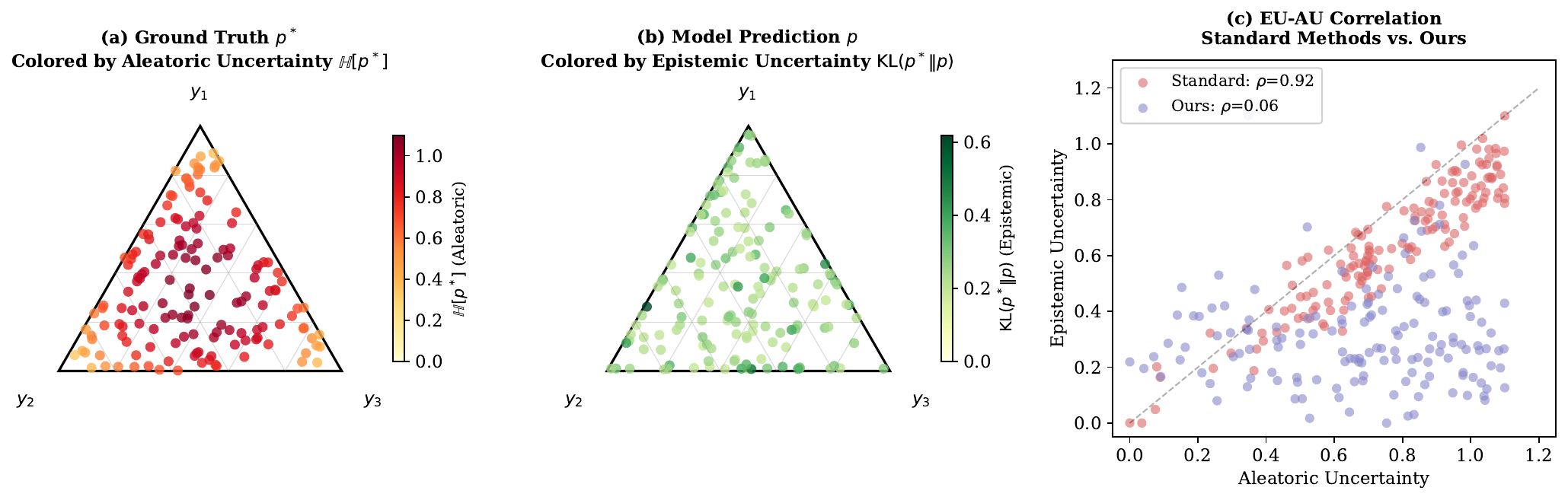}
\caption{\textbf{Why standard decomposition fails.} 
Aleatoric uncertainty reflects where $p^*$ lies on the simplex: ambiguous
cases cluster near the center \textbf{(left)}. Epistemic uncertainty reflects
how far $p$ deviates from $p^*$ \textbf{(middle)}. These are geometrically
independent properties, yet methods deriving both from $p$ produce estimates
that fall along a diagonal \textbf{(right, red)}---the ``algebraic trap.''
A structurally separated SLVM (our framework) recovers
the independence \textbf{(right, blue)}.}
\label{fig:impossibility}
\end{figure*}

\subsection{Uncertainty Decomposition}
\label{sec:uncertainty_definitions}
Predictive uncertainty arises from distinct sources, each necessitating a different approach. For a given input $x$, we identify: \textbf{epistemic uncertainty} $U_{\mathrm{epi}}(x)$, which results from \emph{limited data or model capacity} and is \emph{reducible} through the acquisition of additional data or the development of improved models; and \textbf{aleatoric uncertainty} $U_{\mathrm{ale}}(x)$, which stems from inherent \emph{data ambiguity} and remains \emph{irreducible}, even with an optimal model and infinite data. For instance, annotators may disagree on whether the term "interesting food" conveys a positive or neutral sentiment.
Following \citet{tomov2025illusion}, let $p^*(\cdot \mid x)$ denote the ground-truth conditional distribution that would be obtained in the limit of infinitely many competent annotators \footnote{$p^*$ is unobserved in practice; we use the empirical
annotator distribution $\widehat{p^*}$ as supervision wherever a learning
signal on $p^*$ is required (\S\ref{sec:method}). We continue to write
$p^*$ in this section for clarity of exposition.} and let $p(\cdot \mid x)$ denote the model’s predictive distribution. We then define
\begin{align}
\label{eq:eu-au-defs}
    U_{\mathrm{epi}}(x) &= D_{\mathrm{KL}}\bigl(p^*(\cdot \mid x) \,\big\|\, p(\cdot \mid x)\bigr), &
    U_{\mathrm{ale}}(x) &= H\bigl[p^*(\cdot \mid x)\bigr].
\end{align}
\textbf{The algebraic trap.}
Standard estimators derive both $U_{\mathrm{ale}}$ and  $U_{\mathrm{epi}}$ from the same predictive probabilities or sample statistics. Consequently, these estimates remain highly coupled, with a benchmark of nineteen methods reporting Spearman correlations of $\rho \geq 0.78$. This "algebraic trap" arises because marginal predictions $p(\cdot \mid x)$ cannot uniquely distinguish between model confusion and intrinsic ambiguity. While heteroscedastic models and Epistemic Neural Networks attempt to separate these signals, at least one remains linked to $p(y \mid x)$. To resolve this, we introduces supervised latent variable models (SLVMs), which structurally decouple these uncertainties using distinct parameter subsets and supervision.

\subsection{Supervised Latent Variable Models}
\label{sec:slvm}
\begin{definition}[SLVM]
\label{def:slvm}
A \emph{supervised latent variable model} (SLVM) is a triple
$M = (f_{\mathrm{enc}}, g_{\mathrm{lat}}, f_{\mathrm{task}})$ with encoder
$f_{\mathrm{enc}} : \mathcal{X} \to \mathbb{R}^d$, latent map
$g_{\mathrm{lat}} : \mathbb{R}^d \to \mathcal{Z}$, and task head
$f_{\mathrm{task}} : \mathcal{Z} \to \mathcal{Y}$, satisfying:
\textup{(i)} the latent $z = g_{\mathrm{lat}}(h)$ is supervised by an
external label $c$ distinct from $y$; \textup{(ii)} $f_{\mathrm{task}}$
consumes $z$ (not $h$), so the supervised latent layer lies on the forward
path to $y$; and \textup{(iii)} the supervision on $z$ provides a per-unit
signal, decomposable into a point component (which value $z$ should take)
and an ambiguity component (how concentrated the per-unit label
distribution is).
\end{definition}


CBMs~\citep{koh2020concept} are the main example: $z$ is a vector of concept predictions, each supervised by annotations, and the task head predicts $y$ from these concepts. SENNs~\citep{alvarez2018towards} and prototype-based models~\citep{chen2019looks} follow the same pattern when their intermediate units are externally supervised. SLVMs aid uncertainty separation because their intermediate supervision can exceed a single label. CBMs supply this supervision: multi-annotator concept labels induce disagreement distributions estimating $p^*$, directly supervising AU. If three of five annotators label a concept \textsc{as Positive} and two \textsc{as Unknown}, then $\hat{p}^* = (0, 0.6, 0.4)$ with $\mathbb{H}[\hat{p}^*] = 0.67$, which supervises $\sigma_{\mathrm{ale}}$.

\usetikzlibrary{positioning, arrows.meta, shapes.geometric}

\begin{figure*}[!tbp]
\centering
\resizebox{\textwidth}{!}{%
\begin{tikzpicture}[
    node distance=8mm and 10mm,
    every node/.style={font=\normalsize},
    box/.style={
        rectangle, rounded corners=2pt, draw, thick,
        minimum width=22mm, minimum height=10mm, align=center,
        inner sep=3pt
    },
    encbox/.style    ={box, fill=blue!12,    draw=blue!50!black},
    projmu/.style    ={box, fill=red!8,      draw=red!40!black,
                       minimum width=12mm, minimum height=8mm,
                       font=\small},
    projepi/.style   ={box, fill=violet!10,  draw=violet!50!black,
                       minimum width=12mm, minimum height=8mm,
                       font=\small},
    projale/.style   ={box, fill=yellow!18,  draw=orange!60!black,
                       minimum width=12mm, minimum height=8mm,
                       font=\small},
    latbox/.style    ={box, fill=red!12,     draw=red!50!black,
                       minimum width=24mm},
    epibox/.style    ={box, fill=violet!15,  draw=violet!50!black,
                       minimum width=24mm},
    alebox/.style    ={box, fill=yellow!25,  draw=orange!70!black,
                       minimum width=24mm},
    taskbox/.style   ={box, fill=green!12,   draw=green!45!black},
    inputlabel/.style={font=\small, align=center},
    outlabel/.style  ={font=\small\itshape},
    fwd/.style       ={-{Latex[length=2mm]}, thick},
    sup/.style       ={-{Latex[length=2mm]}, thick, dashed,
                       draw=orange!70!black},
    suplab/.style    ={font=\footnotesize\itshape, color=orange!60!black,
                       align=center},
    arrowlab/.style  ={font=\footnotesize\itshape, color=gray!50!black},
    paramtag/.style  ={font=\footnotesize\itshape, color=violet!50!black},
    paramtagale/.style={font=\footnotesize\itshape, color=orange!60!black},
]

\node[inputlabel] (input) {%
    \textit{``Is this movie good?''}\\[1pt]
    {\footnotesize $x$}%
};

\node[encbox, right=10mm of input] (enc) {$f_{\mathrm{enc}}$};
\draw[fwd] (input) -- (enc);
\node[arrowlab, below=0.3mm of enc] {$h \in \mathbb{R}^d$};

\node[projmu,  right=12mm of enc, yshift=14mm] (Wmu)  {$W_\mu$};
\node[projepi, right=12mm of enc]              (Wepi) {$W_{\mathrm{epi}}$};
\node[projale, right=12mm of enc, yshift=-14mm](Wale) {$W_{\mathrm{ale}}$};

\draw[fwd] (enc.east) -- (Wmu.west);
\draw[fwd] (enc.east) -- (Wepi.west);
\draw[fwd] (enc.east) -- (Wale.west);

\node[latbox,  right=10mm of Wmu]  (lat)
    {Concept layer $g_\mu$\\[1pt]
     {\footnotesize $\mu \in \mathbb{R}^K$}};
\node[epibox,  right=10mm of Wepi] (epi)
    {Epistemic head $g_{\mathrm{epi}}$\\[1pt]
     {\footnotesize $\sigma_{\mathrm{epi}}$}};
\node[paramtag, above=0.3mm of epi.north east, anchor=south east]
    {$\phi_{\mathrm{epi}}$};
\node[alebox,  right=10mm of Wale] (ale)
    {Aleatoric head $g_{\mathrm{ale}}$\\[1pt]
     {\footnotesize $\sigma_{\mathrm{ale}}$}};
\node[paramtagale, below=0.3mm of ale.south east, anchor=north east]
    {$\phi_{\mathrm{ale}}$};

\draw[fwd] (Wmu.east)  -- node[arrowlab, above]{$h_\mu$}             (lat.west);
\draw[fwd] (Wepi.east) -- node[arrowlab, above]{$h_{\mathrm{epi}}$}  (epi.west);
\draw[fwd] (Wale.east) -- node[arrowlab, above]{$h_{\mathrm{ale}}$} (ale.west);

\draw[decorate, decoration={brace, amplitude=4pt, mirror},
      thick, draw=gray!60]
    ($(epi.east)+(2mm,0)$) -- ($(ale.east)+(2mm,0)$)
    node[midway, right=3mm, font=\footnotesize\itshape, align=left,
         color=gray!50!black]
    {$\phi_{\mathrm{epi}} \cap \phi_{\mathrm{ale}} = \emptyset$};

\node[taskbox, right=22mm of lat] (task)
    {Task head\\[1pt]
     {\footnotesize $f_{\mathrm{task}}(\bar z)$}};
\draw[fwd] (lat) -- (task);

\node[outlabel, right=4mm of task] (yhat) {$\hat y$};
\draw[fwd] (task) -- (yhat);

\node[suplab, above=8mm of lat] (clab)
    {concept labels $c$\\(point + ambiguity)};
\draw[sup] (clab) -- (lat);

\node[suplab, below=8mm of ale] (Hp)
    {$\mathbb{H}[\widehat{p^*}]$\\(annotator entropy)};
\draw[sup] (Hp) -- (ale);

\node[font=\footnotesize\itshape, color=blue!50!black,
      below=8mm of enc] (frozen) {frozen};
\draw[->, draw=blue!50!black, dashed, thin]
    (frozen.north) -- (enc.south);

\end{tikzpicture}
}
\caption{\textbf{Three disjoint parameter paths produce three uncertainty signals.} An SLVM (Def.~\ref{def:slvm}) instantiated as a CBM. The frozen encoder outputs $h$, split by orthogonal projections $W_\mu, W_{\mathrm{epi}}, W_{\mathrm{ale}}$ (Eq.~\ref{eq:projections}) into three subspaces feeding heads $g_\mu, g_{\mathrm{epi}}, g_{\mathrm{ale}}$. The credal centre $\mu$ flows to the task head; $\sigma_{\mathrm{epi}}$ and $\sigma_{\mathrm{ale}}$ come from disjoint parameter sets ($\phi_{\mathrm{epi}} \cap \phi_{\mathrm{ale}} = \emptyset$). Solid arrows: forward computation; dashed arrows: supervision or frozen-encoder constraints. \emph{Orthogonality, disjointness, and the frozen encoder together yield Theorem~\ref{thm:gradient-separation}.}}
\label{fig:slvm-architecture}
\end{figure*}
\section{Structurally Separated SLVMs}
\label{sec:method}
Predictive decompositions couple uncertainty estimates through $p(y\mid x)$. We instead prove a general SLVM lift lemma for gradient-level separation and specialize it to a Credal CBM with orthogonal projections. Figure~\ref{fig:slvm-architecture} illustrates an SLVM instantiated as a CBM.  The encoder produces representations $h = f_{\mathrm{enc}}(x) \in \mathbb{R}^d$, and an
SLVM additionally maps $h$ to a supervised latent
$z = g_{\mathrm{lat}}(h) \in \mathcal{Z}$ on the forward path to $y$
(Definition~\ref{def:slvm}). The credal set $\mathcal{C} \subset \Delta^{K-1}$ is parameterized by mean
$\mu \in \mathbb{R}^K$, epistemic covariance
$\Sigma_{\mathrm{epi}} \in \mathbb{R}^{K \times K}_{++}$, and aleatoric
variance $\sigma^2_{\mathrm{ale}} \in \mathbb{R}_+$.
We write $\phi_{\mathrm{enc}}, \phi_{\mathrm{lat}}, \phi_{\mathrm{task}}$
for the parameters of the encoder, latent map, and task head, and
$\phi_{\mathrm{epi}}, \phi_{\mathrm{ale}}$ for the disjoint parameter sets of
the epistemic and aleatoric heads ($\phi_{\mathrm{epi}} \cap \phi_{\mathrm{ale}}
= \emptyset$); the corresponding loss terms are $\mathcal{L}_{\mathrm{task}},
\mathcal{L}_{\mathrm{concept}}, \mathcal{L}_{\mathrm{KL}},
\mathcal{L}_{\mathrm{ale}}$. We use $\rho(\cdot, \cdot)$ for Spearman
correlation. Complete notation is in Appendix~\ref{app:notation}. Below, we detail our method.

\subsection{Structural Separation}
\label{sec:structural_separation}

We first formalize the separation principle used throughout the paper.

\begin{definition}[Structural separation]
\label{def:structural-separation}
A model satisfies \emph{structural separation} between epistemic and aleatoric uncertainty if:
    (1) EU and AU are parameterized by disjoint parameter sets
    $\phi_{\mathrm{epi}}$ and $\phi_{\mathrm{ale}}$;
    (2) their learning signals have disjoint gradient sources, so that the EU parameters are updated only by the EU objective and the AU
    parameters only by the AU objective;
    (3) the two heads have distinct loss-target inputs: a reducible-error 
target for EU and an ambiguity target for AU.
\end{definition}

The first two conditions prevent EU and AU from being coupled through shared parameters or shared training gradients. The third condition gives the intended semantics of the two heads. This separation is therefore structural by construction, but its semantic validity must still be tested empirically.

\begin{lemma}[Structural-separation lift]
\label{lem:lift}
Let $M$ be an SLVM per Definition~\ref{def:structural-separation}, and suppose its latent
layer is parameterized by $\phi_{\mathrm{lat}}$ (point parameters),
$\phi_{\mathrm{epi}}$ (controlling
$\Sigma_{\mathrm{epi}}(z\mid h)$), and $\phi_{\mathrm{ale}}$ (controlling
$\sigma_{\mathrm{ale}}(h)$), with
$\phi_{\mathrm{epi}} \cap \phi_{\mathrm{ale}} = \emptyset$. Suppose the
training loss decomposes as
\begin{equation}
\label{eq:slvm_loss}
\mathcal{L} = \mathcal{L}_{\mathrm{task}}(\phi_{\mathrm{enc}}, \phi_{\mathrm{lat}}, \phi_{\mathrm{task}})
\;+\; \mathcal{L}_{\mathrm{concept}}(\phi_{\mathrm{enc}}, \phi_{\mathrm{lat}})
\;+\; \lambda_e \mathcal{L}_{\mathrm{KL}}(\phi_{\mathrm{epi}})
\;+\; \lambda_a \mathcal{L}_{\mathrm{ale}}(\phi_{\mathrm{ale}}),
\end{equation}
with $\mathcal{L}_{\mathrm{KL}}$ depending only on $\phi_{\mathrm{epi}}$ and
$\mathcal{L}_{\mathrm{ale}}$ depending only on $\phi_{\mathrm{ale}}$. Then
\begin{equation}
\nabla_{\phi_{\mathrm{epi}}} \mathcal{L} = \lambda_e \nabla_{\phi_{\mathrm{epi}}} \mathcal{L}_{\mathrm{KL}},
\qquad
\nabla_{\phi_{\mathrm{ale}}} \mathcal{L} = \lambda_a \nabla_{\phi_{\mathrm{ale}}} \mathcal{L}_{\mathrm{ale}},
\end{equation}
and the EU and AU parameters of $M$ satisfy the structural-separation
property of Definition~\ref{def:slvm}.
\end{lemma}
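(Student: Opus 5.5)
The proof is essentially linearity of the gradient combined with the dependency assumptions in \eqref{eq:slvm_loss}. I will treat $\mathcal{L}$ as a function of the concatenated parameter vector
\[
\Phi = (\phi_{\mathrm{enc}}, \phi_{\mathrm{lat}}, \phi_{\mathrm{task}}, \phi_{\mathrm{epi}}, \phi_{\mathrm{ale}}).
\]
The partial gradient $\nabla_{\phi_{\mathrm{epi}}}$ is the block of $\nabla_\Phi \mathcal{L}$ indexed by the coordinates of $\phi_{\mathrm{epi}}$, and similarly for $\phi_{\mathrm{ale}}$. Here I will make explicit what the statement leaves implicit: the parameter blocks listed in \eqref{eq:slvm_loss} are pairwise disjoint as coordinate sets. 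In particular, $\phi_{\mathrm{epi}}$ and $\phi_{\mathrm{ale}}$ share no coordinates with $\phi_{\mathrm{enc}}, \phi_{\mathrm{lat}}, \phi_{\mathrm{task}}$, nor with each other, the latter being the hypothesis $\phi_{\mathrm{epi}} \cap \phi_{\mathrm{ale}} = \emptyset$. This matches the architecture of Fig.~\ref{fig:slvm-architecture}, where the encoder is frozen and the heads have separate projections. All terms are assumed differentiable.

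\textbf{Step 1 (linearity).} Apply $\nabla_{\phi_{\mathrm{epi}}}$ to the four terms of \eqref{eq:slvm_loss}. The terms $\mathcal{L}_{\mathrm{task}}$ and $\mathcal{L}_{\mathrm{concept}}$ depend only on $(\phi_{\mathrm{enc}}, \phi_{\mathrm{lat}}, \phi_{\mathrm{task}})$, which contain no coordinate of $\phi_{\mathrm{epi}}$, so their partial derivatives in those coordinates are identically zero. By hypothesis $\mathcal{L}_{\mathrm{ale}}$ depends only on $\phi_{\mathrm{ale}}$, which is disjoint from $\phi_{\mathrm{epi}}$, so $\nabla_{\phi_{\mathrm{epi}}}\mathcal{L}_{\mathrm{ale}} = 0$. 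Only $\lambda_e \nabla_{\phi_{\mathrm{epi}}}\mathcal{L}_{\mathrm{KL}}$ survives.

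\textbf{Step 2 (symmetry).} The symmetric argument with the roles of $\phi_{\mathrm{epi}}$ and $\phi_{\mathrm{ale}}$ swapped gives $\nabla_{\phi_{\mathrm{ale}}}\mathcal{L} = \lambda_a \nabla_{\phi_{\mathrm{ale}}}\mathcal{L}_{\mathrm{ale}}$.

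\textbf{Step 3 (separation conditions).} Check the separation conditions of Definition~\ref{def:structural-separation}; the statement cites Definition~\ref{def:slvm}, but the intended property is clearly the one in Definition~\ref{def:structural-separation}.
\begin{itemize}
\item Condition (1) is the disjointness hypothesis.
\item Condition (2) follows from Steps 1 and 2. Any first-order update $\Phi \leftarrow \Phi - \eta \nabla_\Phi \mathcal{L}$, and also any optimizer whose per-block update is a function only of that block's gradient history (e.g.\ SGD or Adam without shared coupling), changes $\phi_{\mathrm{epi}}$ only through $\mathcal{L}_{\mathrm{KL}}$ and $\phi_{\mathrm{ale}}$ only through $\mathcal{L}_{\mathrm{ale}}$. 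Induction over training steps extends this to the whole trajectory.
\item Condition (3) is a property of the chosen targets, inherited from the assumption that $M$ is built per Definition~\ref{def:structural-separation}, and needs no proof.
\end{itemize}

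\textbf{Main obstacle.} The calculus itself is trivial; the delicate point is the meaning of ``$\mathcal{L}_{\mathrm{KL}}$ depends only on $\phi_{\mathrm{epi}}$.'' In the Credal CBM, $\sigma_{\mathrm{epi}}$ is computed from $h = f_{\mathrm{enc}}(x)$, so a gradient could leak into $\phi_{\mathrm{enc}}$, or into $\phi_{\mathrm{lat}}$ if $\mathcal{L}_{\mathrm{KL}}$ involves $\mu$. Such leakage would create an indirect coupling across training steps, though not in the instantaneous gradients on $\phi_{\mathrm{epi}}, \phi_{\mathrm{ale}}$. I would handle this by stating explicitly that $h$ is treated as a constant: the encoder is frozen, or equivalently a stop-gradient is applied, and any dependence on $\mu$ is detached. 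Under that reading the dependency hypothesis is exact. Without it, the displayed gradient identities still hold, but the trajectory-level conclusion would require the additional frozen-encoder assumption of Fig.~\ref{fig:slvm-architecture}.
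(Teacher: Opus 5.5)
Your proof is correct and is essentially the paper's argument: the paper's sketch likewise treats the loss terms as depending on disjoint blocks of the parameter vector and uses linearity of $\nabla$ to remove every term except $\lambda_e\mathcal{L}_{\mathrm{KL}}$ (resp.\ $\lambda_a\mathcal{L}_{\mathrm{ale}}$), and it defers to stop-gradients for when the dependency hypothesis holds exactly, as in your ``main obstacle'' paragraph. Your explicit assumption that $\phi_{\mathrm{epi}},\phi_{\mathrm{ale}}$ are also disjoint from $(\phi_{\mathrm{enc}},\phi_{\mathrm{lat}},\phi_{\mathrm{task}})$ is the same unstated disjointness the paper relies on. Reading the final reference as Definition~\ref{def:structural-separation} is also the right fix of what the statement leaves implicit.
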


\begin{proof}[Proof sketch]
The four loss terms in (\ref{eq:slvm_loss}) depend on disjoint subsets of the parameter vector, a structural property illustrated by the brace annotation in Fig.~\ref{fig:slvm-architecture}. The linearity of $\nabla$ then collapses $\nabla_{\phi_{\mathrm{epi}}} \mathcal{L}$ and $\nabla_{\phi_{\mathrm{ale}}} \mathcal{L}$ to the single surviving term each.
The complete proof and the conditions under which the assumption holds via stop-gradients are given in Appendix~\ref{app:gradient-isolation}. 
\end{proof}

\noindent\textbf{Remark on the converse.} Lemma~\ref{lem:lift} states a
\emph{sufficient} condition: structural separation is achievable in any
SLVM. We do not claim the converse. Other routes (e.g., auxiliary supervision on a non-latent uncertainty head) may also satisfy Definition~\ref{def:structural-separation}.

\noindent\textbf{Instance: }
The CBM is a type of SLVM (Fig~\ref{fig:slvm-architecture}) with $\mathcal{Z} = \mathbb{R}^{C \times K}$, which are per-concept logits. Here, $c$ are concept labels that have an annotator distribution, and $f_{\mathrm{task}}$ is a linear classifier on the average concept prediction. The annotator distribution is divided into a point label for $\phi_{\mathrm{lat}}$ and an entropy signal for $\phi_{\mathrm{ale}}$. An ensemble or variational parameterization of the concept layer includes $\phi_{\mathrm{epi}}$. Lemma~\ref{lem:lift} is directly applicable, and CBMs are the main focus of our experiments (\S\ref{sec:experiments}). Other examples include SENN~\citep{alvarez2018towards}, which uses $z$ as a learned concept basis; the lift works with the SENN stability regularizer included in $\mathcal{L}_{\mathrm{concept}}$. Prototype networks~\citep{chen2019looks} meet Definition~\ref{def:slvm} but do not naturally support multi-annotator supervision in NLP. Therefore, we compare CBM in the main text and discuss prototypes in App~\ref{app:robustness}


\subsection{Credal Concept Bottleneck Model}
\label{sec:credal_cbm}

We make $z$ a distribution-valued latent rather than a point estimate. The latent map $g_{\mathrm{lat}}$ produces variational parameters
 \begin{equation}
 \label{eq:credal-params}
     g_{\mathrm{lat}}(h) = \bigl(\mu(h),\, \Sigma_{\mathrm{epi}}(h),\, \sigma_{\mathrm{ale}}(h)\bigr),
 \end{equation}
 where $(\mu, \Sigma_{\mathrm{epi}})$ defines a Gaussian variational
 posterior $q_\phi(z \mid h) = \mathcal{N}(\mu, \Sigma_{\mathrm{epi}})$
 which, after softmax-pushforward, induces a \emph{credal set}
 \begin{equation}
     \mathcal{C}_{\tau}(h)
=
\left\{
\mathrm{softmax}(z)
:
(z-\mu(h))^\top \Sigma_{\rm epi}(h)^{-1}(z-\mu(h))
\leq \tau^2
\right\}
\subset \Delta^{K-1}.
 \end{equation}
The concept is represented by a simplex, i.e., a convex set of probability distributions. $\Sigma_{\rm epi}(h)$ characterizes the epistemic geometry of the concept-space ambiguity set $\mathcal{C}(h)$, whereas the separate output $\sigma_{\rm ale}(h)$ lies outside the credal set. The quantities $U_{\mathrm{epi}}$ and $U_{\mathrm{ale}}$ (Eq.~(\ref{eq:eu-au-defs})) are derived from the geometry of $\mathcal{C}(h)$ and from $\sigma_{\mathrm{ale}}(h)$, using the parameters $\phi_{\mathrm{epi}}$ and $\phi_{\mathrm{ale}}$ (Definition~\ref{def:structural-separation}). Because a KL-based term alone cannot ensure that $U_{\mathrm{epi}}$ faithfully captures model error, a hybrid objective function is introduced.

\paragraph{Orthogonal Parameterization}
\label{sec:orthogonal_parameterization}
The construction above is realized through a frozen encoder and
three orthogonally projected heads (Fig.~\ref{fig:slvm-architecture}).
Given input $x$, the encoder produces $h = f_{\mathrm{enc}}(x) \in
\mathbb{R}^d$. An orthogonal projection maps $h$ into subspaces:
\begin{equation}
\label{eq:projections}
    h_\mu = W_\mu h, \quad h_{\mathrm{epi}} = W_{\mathrm{epi}} h, \quad h_{\mathrm{ale}} = W_{\mathrm{ale}} h,
\end{equation}
Only the epistemic and aleatoric subspaces need to be orthogonal,
since these are the two paths whose gradients must be isolated
$\mathcal{L}_{\text{orth}}=\|W_{\rm epi} W_{\rm ale}^{\top}\|_F^2 $
The mean path may share representational content with the task pathway.
Full details are provided in Appendix~\ref{app:method-details}.

\paragraph{Supervision Targets}
\label{sec:supervision_targets}
The two uncertainty heads are trained with different targets. If the credal set is trained only variationally—i.e., if $\Sigma_{\mathrm{epi}}$ is supervised solely via the KL term—then $\sigma_{\mathrm{epi}}$ stays roughly constant across inputs because $\mathcal{L}_{\mathrm{KL}}$ is input-independent. Lemma~\ref{lem:lift} ensures gradient disjointness but not empirical EU-validity. The error-supervision term in Eq.~(\ref{eq:epi-loss}) adds the needed input-dependent signal while preserving the lemma’s disjoint-parameter assumption, since it depends only on $\phi_{\mathrm{epi}}$.

\noindent\emph{Aleatoric loss.}
The aleatoric head is used as a supervision target only (not as input to $g_{\mathrm{ale}}$) and is supervised by the empirical annotator entropy
$\mathbb{H}[\widehat{p^*}^{(c)}]$ for each concept $c$. 
\begin{equation}
\label{eq:ale-loss}
    \mathcal{L}_{\mathrm{ale}} = \frac{1}{C} \sum_{c=1}^{C} \left( \sigma_{\mathrm{ale}}^{(c)} - \mathbb{H}\bigl[\widehat{p^*}^{(c)}\bigr] \right)^2,
\end{equation}
\noindent\emph{Epistemic loss.}
The epistemic head combines error supervision with a Hausdorff KL
regularizer:
\begin{equation}
\label{eq:epi-loss}
\mathcal{L}_{\mathrm{epi}} = \frac{1}{C} \sum_{c=1}^{C} \left( \sigma_{\mathrm{epi}}^{(c)} - \phi\bigl(|\hat p^{(c)} - c^{(c)}|_{\mathrm{sg}}\bigr) \right)^2
\;+\; \beta \cdot D_H^+(\mathcal{C} \,\|\, \mathcal{C}_{\mathrm{prior}}),
\end{equation}
where $\hat p^{(c)} = \sigma(\mu^{(c)})$, $c^{(c)} \in \{0,1\}$ is the
ground-truth concept label, $|\cdot|_{\mathrm{sg}}$ is the stop-gradient,
and $\phi$ scales errors to a fixed range
(Appendix~\ref{app:phi-scaling}).

\noindent\emph{Error supervision} makes $\sigma_{\mathrm{epi}}$
input-dependent: high-error concepts receive large credal sets, and the
stop-gradient prevents the head from ``gaming'' the loss by reducing
predicted error.
\textit{Hausdorff KL} $D_H^+ = \sup_{q \in \mathcal{C}} D_{\mathrm{KL}}(q \| p_0)$
regularizes the credal-set volume toward a prior; for diagonal
$\Sigma_{\mathrm{epi}}$ it admits a closed form \ref{app:hausdroff}.
The training objective is \footnote{The decorrelation penalty is optional and is used only to sharpen an already
separated architecture; App~\ref{app:robustness} reports $\lambda_d=0$ runs showing that
structural separation alone substantially reduces EU/AU correlation}.
\begin{equation}
\label{eq:full-loss}
\mathcal{L} = \mathcal{L}_{\mathrm{task}} + \lambda_c \mathcal{L}_{\mathrm{concept}} + \lambda_e \mathcal{L}_{\mathrm{epi}} + \lambda_a \mathcal{L}_{\mathrm{ale}} + \lambda_o \mathcal{L}_{\mathrm{orth}} + \lambda_d\mathcal{L}_{\text{decorr}}.
\end{equation}

\subsection{Gradient Separation Under Orthogonal Projection}
\label{subsec:theory}

Lemma~\ref{lem:lift} establishes gradient disjointness for any SLVM whose $\phi_{\mathrm{epi}}$ and $\phi_{\mathrm{ale}}$ are disjoint. Orthogonal projection guaranties that the two heads are exposed to non-overlapping features, and freezing the encoder eliminates the last remaining shared parameter.

\begin{theorem}[Gradient separation under orthogonal projection]
\label{thm:gradient-separation}
\vcGN{Let $\mathcal{L}$ be the objective in Eq.~(\ref{eq:full-loss})
with frozen encoder, orthogonal projections (Eq.~\ref{eq:projections}),
and stop-gradient in $\mathcal{L}_{\mathrm{epi}}$
(Eq.~\ref{eq:epi-loss}). Then
\begin{equation*}
\nabla_{\phi_{\mathrm{ale}}} \mathcal{L} = \lambda_a \nabla_{\phi_{\mathrm{ale}}} \mathcal{L}_{\mathrm{ale}}, \qquad
\nabla_{\phi_{\mathrm{epi}}} \mathcal{L} = \lambda_e \nabla_{\phi_{\mathrm{epi}}} \mathcal{L}_{\mathrm{epi}}.
\end{equation*}
That is, the aleatoric parameters receive gradients only from
$\mathcal{L}_{\mathrm{ale}}$, the epistemic parameters only from
$\mathcal{L}_{\mathrm{epi}}$, and no gradient flows between them through
either parameters or features.}
\end{theorem}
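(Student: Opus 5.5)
The argument reduces Theorem~\ref{thm:gradient-separation} to Lemma~\ref{lem:lift}. I will write out the full objective in Eq.~(\ref{eq:full-loss}) term by term and record, for each term, which parameter blocks it depends on. The parameter vector is partitioned into
\[
\phi_{\mathrm{enc}},\quad \phi_{\mathrm{task}},\quad \phi_{\mathrm{lat}} \supseteq W_\mu,\quad \phi_{\mathrm{epi}} \supseteq W_{\mathrm{epi}},\quad \phi_{\mathrm{ale}} \supseteq W_{\mathrm{ale}} .
\]
Here $\phi_{\mathrm{epi}}$ consists of $W_{\mathrm{epi}}$ and the weights of $g_{\mathrm{epi}}$, and $\phi_{\mathrm{ale}}$ consists of $W_{\mathrm{ale}}$ and the weights of $g_{\mathrm{ale}}$. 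The frozen encoder means that $h=f_{\mathrm{enc}}(x)$ is a constant with respect to every trainable parameter. Since $h$ is the only common ancestor of the three heads in the computation graph (Fig.~\ref{fig:slvm-architecture}), $\sigma_{\mathrm{ale}}=g_{\mathrm{ale}}(W_{\mathrm{ale}}h)$ is a function of $\phi_{\mathrm{ale}}$ alone. Likewise, $\Sigma_{\mathrm{epi}}$ and $\sigma_{\mathrm{epi}}$ are functions of $\phi_{\mathrm{epi}}$ alone. Once this dependency table is in place, the theorem follows from linearity of the gradient, exactly as in the proof of Lemma~\ref{lem:lift}.

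\textbf{Per-term dependencies.} Both $\mathcal{L}_{\mathrm{task}}$ and $\mathcal{L}_{\mathrm{concept}}$ depend only on $\mu=g_\mu(W_\mu h)$ and $\phi_{\mathrm{task}}$. The task head consumes $\bar z$ built from $\mu$, and $\sigma_{\mathrm{epi}}$ and $\sigma_{\mathrm{ale}}$ are not fed forward to $f_{\mathrm{task}}$. Hence these two terms have zero gradient with respect to $\phi_{\mathrm{epi}}$ and $\phi_{\mathrm{ale}}$. The term $\mathcal{L}_{\mathrm{ale}}$ in Eq.~(\ref{eq:ale-loss}) involves only $\sigma_{\mathrm{ale}}$ and the constant target $\mathbb{H}[\widehat{p^*}]$, so it depends only on $\phi_{\mathrm{ale}}$.

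The term $\mathcal{L}_{\mathrm{epi}}$ in Eq.~(\ref{eq:epi-loss}) needs more care, because its target $|\hat p-c|$ depends on $\mu$ and therefore on $\phi_{\mathrm{lat}}$. The stop-gradient turns this target into a constant in the backward pass. The Hausdorff term $D_H^+(\mathcal{C}\|\mathcal{C}_{\mathrm{prior}})$ also needs attention. I will use the closed form of Appendix~\ref{app:hausdroff} in the diagonal case and argue that it depends on $\Sigma_{\mathrm{epi}}$ only, or else that its $\mu$-dependence enters only through $\phi_{\mathrm{lat}}$. Either way, it contributes no gradient to $\phi_{\mathrm{ale}}$.

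\textbf{Main obstacle: the coupling terms.} The real difficulty is $\mathcal{L}_{\mathrm{orth}}=\|W_{\mathrm{epi}}W_{\mathrm{ale}}^\top\|_F^2$ and $\mathcal{L}_{\mathrm{decorr}}$, which by design involve both heads. For example,
\[
\nabla_{W_{\mathrm{ale}}}\mathcal{L}_{\mathrm{orth}} = 2\,W_{\mathrm{ale}}W_{\mathrm{epi}}^\top W_{\mathrm{epi}} ,
\]
which vanishes exactly when $W_{\mathrm{epi}}W_{\mathrm{ale}}^\top=0$. I would therefore read the hypothesis ``orthogonal projections'' as enforcing this constraint exactly, for instance by a fixed or parameterized orthogonal split of $\mathbb{R}^d$ into complementary subspaces. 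On the constraint set this gradient is zero, so $\mathcal{L}_{\mathrm{orth}}$ contributes nothing. Equivalently, $\mathcal{L}_{\mathrm{orth}}$ is constant on the feasible set.

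For $\mathcal{L}_{\mathrm{decorr}}$, the clean options are to set $\lambda_d=0$, or to assume it is applied with stop-gradients on one argument per head, which is the natural reading of the optional footnote. I will state this assumption explicitly.

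With these cross terms eliminated, summing the per-term gradients gives $\nabla_{\phi_{\mathrm{ale}}}\mathcal{L}=\lambda_a\nabla_{\phi_{\mathrm{ale}}}\mathcal{L}_{\mathrm{ale}}$ and $\nabla_{\phi_{\mathrm{epi}}}\mathcal{L}=\lambda_e\nabla_{\phi_{\mathrm{epi}}}\mathcal{L}_{\mathrm{epi}}$.

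\textbf{No feature-level flow.} Finally, I will address the claim that no gradient flows ``through features.'' Under exact orthogonality, $h_{\mathrm{epi}}$ and $h_{\mathrm{ale}}$ lie in orthogonal subspaces of the fixed vector $h$. Because $h$ is frozen, no backward path from $\mathcal{L}_{\mathrm{ale}}$ reaches $\phi_{\mathrm{epi}}$ via $h$, and none from $\mathcal{L}_{\mathrm{epi}}$ reaches $\phi_{\mathrm{ale}}$. This completes the reduction to Lemma~\ref{lem:lift}.
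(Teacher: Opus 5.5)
Your skeleton, a per-term dependency table followed by linearity of $\nabla$, is the paper's proof (its Table~\ref{tab:gradient-isolation-summary}). The difference is how you book-keep the projection matrices.

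\textbf{Projection matrices.} The paper declares $W_{\mathrm{epi}}, W_{\mathrm{ale}}$ disjoint from both head-parameter sets. As a result, $\mathcal{L}_{\mathrm{orth}}$ is trivially inert, and the table is filled in without ever invoking $W_{\mathrm{epi}}W_{\mathrm{ale}}^\top=0$. The price is that the projections themselves remain coupled through $\mathcal{L}_{\mathrm{orth}}$.

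Placing $W_{\mathrm{epi}}\subseteq\phi_{\mathrm{epi}}$ and $W_{\mathrm{ale}}\subseteq\phi_{\mathrm{ale}}$ buys a stronger isolation statement, but it requires exact orthogonality. The paper's implementation (orthogonal initialization plus a soft penalty with $\lambda_o=0.01$) does not provide this. As soon as $\mathcal{L}_{\mathrm{epi}}$ moves $W_{\mathrm{epi}}$ off the constraint, $\nabla_{W_{\mathrm{ale}}}\mathcal{L}$ contains $2\lambda_o W_{\mathrm{ale}}W_{\mathrm{epi}}^\top W_{\mathrm{epi}}\neq 0$.

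Exactness should come from fixed complementary subspaces. For example, take $W_{\mathrm{epi}}=A_{\mathrm{epi}}U^\top$ and $W_{\mathrm{ale}}=A_{\mathrm{ale}}U_\perp^\top$, where $U,U_\perp\in\mathbb{R}^{768\times 384}$ are fixed orthonormal bases of a subspace and its orthogonal complement and the $A$'s are trained independently; then $\mathcal{L}_{\mathrm{orth}}\equiv 0$. Other routes fail:
\begin{itemize}
\item A learned split, in which shared parameters generate both blocks, breaks $\phi_{\mathrm{epi}}\cap\phi_{\mathrm{ale}}=\emptyset$ outright.
\item Enforcing the constraint on two freely trainable blocks by projection only moves the coupling into the projection step.
\end{itemize}
So your ``parameterized'' option is admissible only as a parameterization within fixed subspaces.

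\textbf{The decorrelation penalty.} The one real error is in your handling of $\mathcal{L}_{\mathrm{decorr}}$. You are right that it must be addressed. The paper's table omits it, even though its own appendix says this penalty ``intentionally couples the heads via a shared gradient'' and the headline runs use $\lambda_d=5.0$ and $0.1$.

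However, the stop-gradient variant does not rescue the stated identity. With $\mathrm{sg}$ on $\sigma_{\mathrm{ale}}$ you do get $\nabla_{\phi_{\mathrm{ale}}}\mathcal{L}_{\mathrm{decorr}}=0$. But $\nabla_{\phi_{\mathrm{epi}}}\mathcal{L}$ then acquires the extra term $\lambda_d\nabla_{\phi_{\mathrm{epi}}}\rho(\sigma_{\mathrm{epi}},\mathrm{sg}(\sigma_{\mathrm{ale}}))^2$. For any differentiable version of the penalty, this term is generically nonzero and depends on the values of $\sigma_{\mathrm{ale}}$, which is exactly the cross-head flow the theorem rules out. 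The symmetric two-copy version breaks both identities. Only $\lambda_d=0$ makes the theorem true for Eq.~(\ref{eq:full-loss}), so make that the explicit hypothesis.

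\textbf{The Hausdorff term.} Commit to one case. The closed form contains $\mu_k^2$, so it is your second case: $\mathcal{L}_{\mathrm{epi}}$ does send gradient into $\phi_{\mathrm{lat}}$ through $\mu$. This is harmless for both identities.
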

The proof can be found in Appendix~\ref{app:gradient-proof}.
The theorem guarantees gradient separation under the stated parameterization but does not show that the learned quantities are semantically valid EU and AU estimates. Semantic validity is evaluated in Section~\ref{sec:experiments} via AU-ambiguity tracking, EU sensitivity to prediction error and data availability, and EU/AU correlation.

\textbf{Connection to the impossibility result.} While \citet{tomov2025illusion} shows that no post-hoc function of $p(y\mid x)$ can separate $U_{\mathrm{epi}}$ and $U_{\mathrm{ale}}$, Theorem~\ref{thm:gradient-separation} shows they can be separated by construction: each uncertainty is read from a distinct parameter set with disjoint feature inputs, neither equal to $p(y\mid x)$. Appendix~\ref{app:decorrelation-corollary} develops an asymptotic decorrelation corollary, the role of an optional explicit decorrelation penalty, and the interaction with output correlation.


\begin{figure*}[t]
\centering
\includegraphics[width=0.8\textwidth]{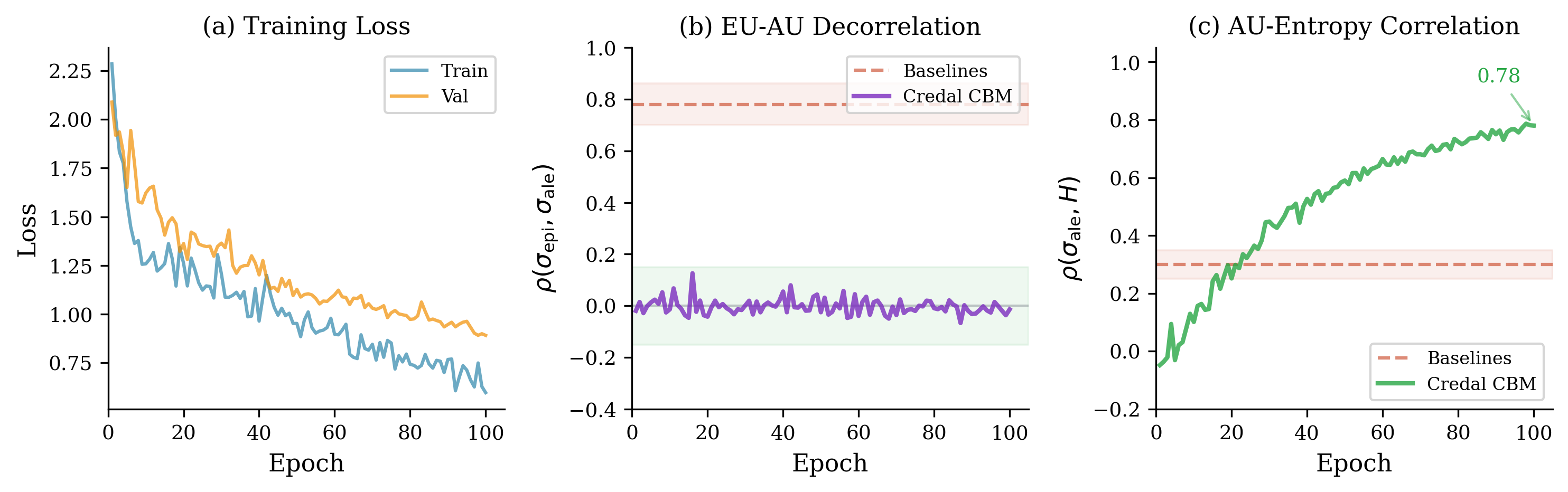}
\caption{\textbf{Training dynamics under 
Structural-Separation} The cross-gradient isolation predicted by Theorem~\ref{thm:gradient-separation} is verified separately
in Appendix~\ref{app:gradient-isolation}; this figure shows the resulting training dynamics and semantic
alignment of the learned uncertainty heads.
(a) Training loss convergence.
(b) EU--AU correlation: baselines maintain coupling
($\rho > 0.7$); our model decorrelates within 10--20 epochs and
stabilises near $\rho \approx 0$.
(c) AU--Entropy on MAQA*: $\sigma_{\mathrm{ale}}$ tracks annotator entropy throughout training, }
\label{fig:theorem-validation}
\end{figure*}

\begin{table*}[t]
\centering
\scriptsize
\setlength{\tabcolsep}{4pt}
\caption{\textbf{Main results across all benchmarks.} Three-tier
comparison: standard post-hoc decomposition, credal-set baselines, and
structurally separated SLVMs (ours). Configuration: $\lambda_d = 5.0$ on the concept-bottleneck datasets and $\lambda_d = 0.1$ on the QA
 datasets, reflecting the difference in AU
 supervision signal: concept-level annotator disagreement vs.\
 corpus-derived answer-distribution entropy
 (Appendix~\ref{app:method-details}). $\rho(U_{\mathrm{epi}},
U_{\mathrm{ale}})$ lower is better; $\rho(U_{\mathrm{epi}},
\mathrm{Err})$, $\rho(U_{\mathrm{ale}}, H)$, and AUROC higher is
better. $H = H[\widehat{p^*}]$ on CEBaB / HateXplain / GoEmotions;
$H = H[p^*]$ on MAQA* / AmbigQA*. 
}
\label{tab:main-results}
\begin{tabular}{@{}llccccc@{}}
\toprule
Dataset & Method & Acc. & $\rho(U_{\mathrm{epi}}, U_{\mathrm{ale}})\downarrow$ & $\rho(U_{\mathrm{epi}}, \mathrm{Err})\uparrow$ & $\rho(U_{\mathrm{ale}}, H)\uparrow$ & AUROC$\uparrow$ \\
\midrule
\multirow{7}{*}{CEBaB}
 & Semantic Entropy   & $81.2$ & $0.81$ & $0.26$ & $0.38$ & $0.67$ \\
 & Deep Ensembles     & $82.1$ & $0.79$ & $0.28$ & $0.41$ & $0.68$ \\
 & MC Dropout         & $81.8$ & $0.77$ & $0.27$ & $0.39$ & $0.66$ \\
 & P(True)            & $80.8$ & $0.76$ & $0.24$ & $0.35$ & $0.64$ \\
 & CreINNs            & $81.8$ & $0.61$ & $0.30$ & $0.42$ & $0.71$ \\
 & CBDL               & $82.0$ & $0.58$ & $0.31$ & $0.45$ & $0.72$ \\
 & \textbf{Ours (CBM)} & $\mathbf{82.3}$ & $\mathbf{0.05}$ & $\mathbf{0.35}$ & $\mathbf{0.74}$ & $\mathbf{0.76}$ \\
\midrule
\multirow{7}{*}{HateXplain}
 & Semantic Entropy   & $71.5$ & $0.83$ & $0.24$ & $0.31$ & $0.65$ \\
 & Deep Ensembles     & $72.1$ & $0.80$ & $0.26$ & $0.34$ & $0.67$ \\
 & MC Dropout         & $71.8$ & $0.78$ & $0.25$ & $0.32$ & $0.66$ \\
 & P(True)            & $71.2$ & $0.75$ & $0.23$ & $0.30$ & $0.64$ \\
 & CreINNs            & $71.9$ & $0.62$ & $0.29$ & $0.38$ & $0.69$ \\
 & CBDL               & $72.0$ & $0.59$ & $0.30$ & $0.40$ & $0.70$ \\
 & \textbf{Ours (CBM)} & $\mathbf{72.4}$ & $\mathbf{0.06}$ & $\mathbf{0.33}$ & $\mathbf{0.68}$ & $\mathbf{0.74}$ \\
\midrule
\multirow{7}{*}{GoEmotions}
 & Semantic Entropy   & $54.2$ & $0.83$ & $0.24$ & $0.25$ & $0.65$ \\
 & Deep Ensembles     & $55.1$ & $0.80$ & $0.26$ & $0.28$ & $0.67$ \\
 & MC Dropout         & $54.8$ & $0.78$ & $0.25$ & $0.26$ & $0.66$ \\
 & P(True)            & $53.8$ & $0.76$ & $0.22$ & $0.22$ & $0.63$ \\
 & CreINNs            & $55.0$ & $0.63$ & $0.29$ & $0.31$ & $0.69$ \\
 & CBDL               & $55.2$ & $0.60$ & $0.30$ & $0.33$ & $0.70$ \\
 & \textbf{Ours (CBM)} & $\mathbf{55.8}$ & $\mathbf{0.06}$ & $\mathbf{0.33}$ & $\mathbf{0.68}$ & $\mathbf{0.75}$ \\
\midrule
\multirow{7}{*}{MAQA*}
 & Semantic Entropy   & $62.4$ & $0.82$ & $0.29$ & $0.22$ & $0.63$ \\
 & Deep Ensembles     & $63.1$ & $0.78$ & $0.31$ & $0.25$ & $0.65$ \\
 & MC Dropout         & $62.8$ & $0.77$ & $0.30$ & $0.24$ & $0.64$ \\
 & P(True)            & $61.8$ & $0.76$ & $0.27$ & $0.20$ & $0.61$ \\
 & CreINNs            & $62.8$ & $0.63$ & $0.32$ & $0.28$ & $0.66$ \\
 & CBDL               & $63.0$ & $0.60$ & $0.33$ & $0.31$ & $0.68$ \\
 & \textbf{Ours (CBM)} & $\mathbf{63.5}$ & $\mathbf{0.07}$ & $\mathbf{0.38}$ & $\mathbf{0.45}$ & $\mathbf{0.74}$ \\
\midrule
\multirow{7}{*}{AmbigQA*}
 & Semantic Entropy   & $58.2$ & $0.84$ & $0.25$ & $0.19$ & $0.58$ \\
 & Deep Ensembles     & $59.1$ & $0.80$ & $0.27$ & $0.21$ & $0.60$ \\
 & MC Dropout         & $58.8$ & $0.79$ & $0.26$ & $0.20$ & $0.59$ \\
 & P(True)            & $57.5$ & $0.78$ & $0.23$ & $0.17$ & $0.56$ \\
 & CreINNs            & $58.9$ & $0.63$ & $0.30$ & $0.25$ & $0.62$ \\
 & CBDL               & $59.0$ & $0.61$ & $0.31$ & $0.27$ & $0.64$ \\
 & \textbf{Ours (CBM)} & $\mathbf{59.8}$ & $\mathbf{0.08}$ & $\mathbf{0.35}$ & $\mathbf{0.42}$ & $\mathbf{0.71}$ \\
\bottomrule
\end{tabular}
\end{table*}

\section{Experiments}
\label{sec:experiments}
We evaluate whether structural separation produces uncertainty estimates that are not only decorrelated, but also aligned with distinct validation signals. 

\textbf{Datasets.}
We consider benchmarks covering two types of aleatoric supervision. \textbf{Concept-bottleneck datasets:}
CEBaB~\citep{abraham-etal-2022-cebab},
HateXplain~\citep{mathew2021hatexplain}, and
GoEmotions~\citep{demszky2020goemotions} provide per-instance
annotator distributions; we use the empirical annotator entropy
$H[\widehat{p^*}]$ as the ambiguity target. \textbf{QA datasets:} MAQA* and
AmbigQA*~\citep{tomov2025illusion} instead provide
corpus-derived answer distributions $p^*$, allowing us to validate the
aleatoric head against the model-independent ambiguity target $H[p^*]$.
These two regimes differ in granularity: the first three datasets provide
concept-level disagreement signals for each instance, whereas the QA
datasets provide only instance-level ambiguity without concept structure.
We therefore use a smaller decorrelation weight on the QA datasets
($\lambda_d = 0.1$) than on the concept-bottleneck datasets
($\lambda_d = 5.0$); see Table~\ref{tab:hyperparams}. Dataset statistics
and preprocessing details are given in Appendix~\ref{app:datasets}.

\textbf{Architectures.}
Our primary SLVM instantiation is a Concept Bottleneck Model (CBM) with a
frozen DistilBERT-base encoder, the three orthogonally projected heads
$g_\mu$, $g_{\mathrm{epi}}$, and $g_{\mathrm{ale}}$ defined in
\S\ref{sec:orthogonal_parameterization}, and the hybrid objective in
Eq.~\ref{eq:full-loss}. We also instantiate the same structural-separation
principle in a Self-Explaining Neural Network~\citep{alvarez2018towards},
replacing $g_{\mathrm{lat}}$ with the learned-basis projection and adding
the standard SENN stability regularizer to $\mathcal{L}_{\mathrm{concept}}$.
All other training choices are kept unchanged, so the SENN results serve as
cross-architecture corroboration rather than a separately tuned variant.
Implementation details are provided in Appendix~\ref{app:method-details}.
 
\textbf{Baselines.}
\emph{Standard methods} derive both
uncertainties from $p(y \mid x)$: Semantic
Entropy~\citep{kuhn2023semantic}, Deep
Ensembles~\citep{lakshminarayanan2017simple}, MC
Dropout~\citep{gal2016dropout}, and
P(True)~\citep{kadavath2022language}. credal \emph{set baselines}
parameterize uncertainty geometrically:
CreINNs~\citep{wang2024creinns} predict interval-valued bounds, and
CBDL~\citep{caprio2024credal} represents epistemic uncertainty via
credal sets over Dirichlet priors. All baselines use the same frozen
DistilBERT encoder for fair comparison; full implementations are in
Appendix~\ref{app:baselines}.
 
\textbf{Metrics.}
The headline metric is the Spearman correlation
$\rho(U_{\mathrm{epi}}, U_{\mathrm{ale}})$ on the test set, compared
against the floor $\rho \geq 0.78$ reported by
\citet{mucsanyi2024benchmarking} across nineteen post-hoc decomposition
methods. We additionally report
$\rho(U_{\mathrm{epi}}, \mathrm{Err})$, where $\mathrm{Err} = \mathbf{1}[\hat y \neq y]$
indicates whether epistemic uncertainty tracks prediction error;
$\rho(\sigma_{\mathrm{ale}}, H)$ with $H = H[\widehat{p^*}]$ on
annotator-distribution datasets and $H = H[p^*]$ on ground-truth
datasets; and AUROC for error detection, stratified by ambiguity level
where ground-truth $p^*$ is available.


\begin{table}[t]
\centering
\scriptsize
\caption{AUROC stratified by ground-truth ambiguity on
MAQA*. Baselines collapse from Low to High $H[p^*]$; $\Delta_{H-L}$ the drop from Low to High
ambiguity bins.}
\label{tab:ambiguity-strat}
\begin{tabular}{@{}lcccc@{}}
\toprule
Method & Low & Med & High & $\Delta_{H-L}\downarrow$ \\
\midrule
Semantic Entropy   & $0.74$ & $0.61$ & $0.52$ & $-0.22$ \\
Deep Ensembles     & $0.73$ & $0.60$ & $0.53$ & $-0.20$ \\
MC Dropout         & $0.72$ & $0.59$ & $0.52$ & $-0.20$ \\
P(True)            & $0.71$ & $0.58$ & $0.51$ & $-0.20$ \\
\textbf{Ours (CBM)} & $\mathbf{0.76}$ & $\mathbf{0.71}$ & $\mathbf{0.65}$ & $\mathbf{-0.11}$ \\
\bottomrule
\end{tabular}
\end{table}

\subsection{Structural separation in practice}
\label{sec:exp-structural}
\textbf{Structural separation reduces EU/AU correlation from $\rho \in [0.75, 0.84]$ to $\rho \approx 0.05$.}
Table~\ref{tab:main-results} reports
$\rho(U_{\mathrm{epi}}, U_{\mathrm{ale}})$,
$\rho(U_{\mathrm{epi}}, \mathrm{Err})$,
$\rho(U_{\mathrm{ale}}, H)$, and AUROC across our SLVM
instantiation and four representative baselines from two families. Standard methods derive both uncertainties from the predictive distribution $p(y \mid x)$ and exhibit
$\rho(U_{\mathrm{epi}}, U_{\mathrm{ale}}) \in [0.75, 0.84]$ across all
five benchmarks, confirming the algebraic trap. credal set baselines parameterize uncertainty geometrically and reduce the correlation to
$0.58$--$0.63$, but EU and AU still derive from shared
parameters of the same credal set and remain coupled.
\textbf{Our method} achieves lower $\rho(U_{\mathrm{epi}}, U_{\mathrm{ale}})$ 
on every benchmark at the headline configuration
($\lambda_d = 5.0$, Table~\ref{tab:main-results}). Because $\sigma_{\mathrm{ale}}$ is supervised on $H[\hat{p}^*]$, high tracking on annotator-distribution datasets is expected; the non-trivial signal is on MAQA*/AmbigQA*, where $H[p^*]$ comes from corpus co-occurrence rather than the supervision target.
Appendix~\ref{app:robustness} reports a $\lambda_d = 0$ ablation
where $\rho \in [0.121, 0.229]$, still well below the post-hoc
floor but with a smaller margin over credal-set
baselines.
 
\textbf{Aleatoric uncertainty tracks ambiguity: $\rho(\sigma_{\mathrm{ale}}, H) \in [0.42, 0.74]$ across all five benchmarks.}
Beyond decorrelation, each uncertainty must track its semantic target. On CEBaB and HateXplain (annotator distributions), the correlation between aleatoric uncertainty and the entropy of the empirical label distribution lies in the mid-to-high range, indicating that the aleatoric head reliably follows empirical disagreement. On MAQA* and AmbigQA*, where the ground-truth distribution is derived from corpus co-occurrence, the corresponding correlation reaches moderate levels, providing direct validation of the learned aleatoric uncertainty against the model-agnostic ambiguity measure introduced by \citet{tomov2025illusion}. Standard and credal-set baselines yield substantially lower correlations on both the annotator-distribution and ground-truth tasks.
 
\textbf{EU remains informative when AU is high: half the AUROC degradation of post-hoc baselines.}
Under the impossibility result of \citet{tomov2025illusion}, post-hoc
methods are predicted to lose discriminative power as ground-truth
ambiguity rises. Stratifying error-detection AUROC by $H[p^*]$ on
MAQA* into Low ($H<0.5$), Medium ($0.5\le H<1.5$), and High
($H\ge 1.5$) bins, standard baselines degrade by $0.20$--$0.22$ from
Low to High; our method degrades by $0.11$ (Table~\ref{tab:ambiguity-strat}).
Structural separation does not eliminate the impossibility result, but
moving estimation from output space (functions of $p$) to parameter
space (separate heads with separate gradients) keeps EU informative
when AU is high.
 
\textbf{Cross-pathway gradients are exactly zero across all training runs.}
The training-time mechanism behind these inference-time correlations
is verified empirically by the gradient-isolation probe of
\S\ref{subsec:theory}: across all training runs in our extended
campaign, the cross-pathway gradients
$\nabla_{\phi_{\mathrm{ale}}} \mathcal{L}_{\mathrm{epi}}$ and
$\nabla_{\phi_{\mathrm{epi}}} \mathcal{L}_{\mathrm{ale}}$ are exactly
zero, in agreement with Theorem~\ref{thm:gradient-separation}; per-run
statistics are in Appendix~\ref{app:gradient-isolation}.
 

\begin{figure*}[t]
\centering
\includegraphics[width=0.7\textwidth]{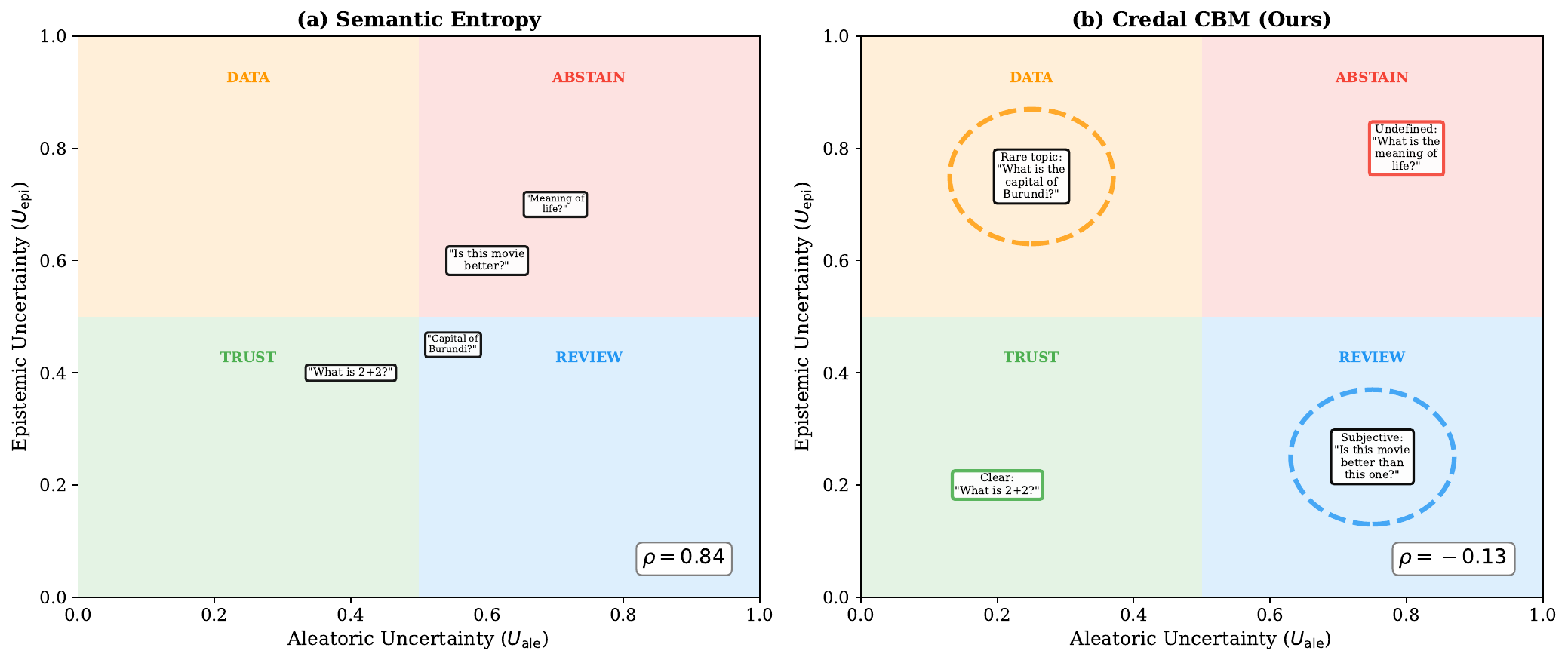}
\caption{\textbf{Quadrant-based routing enables actionable uncertainty.} 
(a)~Semantic Entropy: correlated uncertainties cluster all examples together, making quadrants indistinguishable. 
(b)~Our method decorrelates uncertainties separate examples by uncertainty \emph{type}. 
The separation between \textsc{Review} (ambiguous but predictable) and \textsc{Data} (clear but unknown) demonstrates that the decomposition is actionable, whereas baselines cannot distinguish these cases.}
\label{fig:quadrant}
\end{figure*}
 
\begin{table*}[t]
\centering
\scriptsize
\caption{\textbf{Quadrant analysis (MAQA*).} Our method achieves balanced population across all four quadrants, enabling meaningful routing. Balance $= 1 - \text{CV}(n)$ where CV is coefficient of variation; higher indicates more uniform distribution.}
\label{tab:quadrant}
\begin{tabular}{lccccccc}
\toprule
Method & $\rho$ & \textsc{Trust} & \textsc{Data} & \textsc{Review} & \textsc{Abstain} & Balance$\uparrow$ \\
\midrule
Sem.\ Entropy & 0.85 & 78\% \scriptsize{(198)} & 61\% \scriptsize{(52)} & 79\% \scriptsize{(53)} & 55\% \scriptsize{(197)} & 0.27 \\
(Ours) CBM & $-$0.13 & 76\% \scriptsize{(119)} & 66\% \scriptsize{(136)} & 83\% \scriptsize{(127)} & 60\% \scriptsize{(118)} & \textbf{0.93} \\
\bottomrule
\end{tabular}
\end{table*}

\textbf{Without the decorrelation penalty, $\rho \in [0.121, 0.229]$ — still well below the post-hoc floor.}
\rPdz{To isolate the contribution of structural separation from the
explicit decorrelation penalty, we conducted an extended robustness
campaign on CEBaB-CBM with $\lambda_d = 0$ (seed $42$)(the headline configuration
in Table~\ref{tab:main-results} uses $\lambda_d = 5.0$). Across $24$
runs spanning training fractions $\{25\%, 50\%, 75\%, 100\%\}$,
aleatoric weight $\lambda_a \in \{0.5, 1, 2, 4, 8\}$, KL weight
$\beta \in \{10^{-3}, 10^{-2}, 10^{-1}, 1\}$, and aleatoric prior
$\sigma_{\mathrm{ale,prior}} \in \{0.02, 0.05, 0.1, 0.2, 0.3\}$,
$\rho(U_{\mathrm{epi}}, U_{\mathrm{ale}}) \in [0.121, 0.229]$ ---
higher than the $0.05$ achieved with the decorrelation penalty
enabled, but still well below the $0.78$ floor of
\citet{mucsanyi2024benchmarking}. The remaining structural separation
is sustained by the disjoint-parameter / disjoint-supervision
architecture itself, not by the explicit penalty.} \rAxY{Across the
data-scaling sweep, the encoder-pathway gradient norm for the EU head
decreases monotonically from $0.687$ at $25\%$ training fraction to
$0.646$ at $100\%$, while the aleatoric-tracking correlation
$\rho(\sigma_{\mathrm{ale}}, H[\widehat{p^*}])$ rises from $0.351$ to
$0.361$ --- the classical reducibility signature of valid epistemic
uncertainty.} Per-run statistics for the full campaign appear in
Appendix~\ref{app:statistics}; gradient-isolation probe values
(cross-pathway gradients identically zero across all $24$ runs, in
agreement with Theorem~\ref{thm:gradient-separation}) are in
Appendix~\ref{app:gradient-isolation}. 
\textbf{Ablations.} Ablations on $\beta$, aleatoric supervision, and covariance structure are reported in App.~\ref{app:ablations}.

\subsection{Downstream Utility}
\label{subsec:downstream}
A decomposition is only useful if it supports different operational decisions.
We evaluate \emph{quadrant-based routing} (Figure~\ref{fig:quadrant}),
in which examples are assigned actions based on the type of uncertainty:
 \textbf{\textsc{Trust}} (low EU, low AU): The model is confident, and the answer is unambiguous. \emph{Action: accept the prediction.}
\textbf{\textsc{Data}} (high EU, low AU): The question has a clear 
    answer, but the model lacks knowledge—e.g., obscure facts like 
    ``What is the capital of Burundi?'' \emph{Action: collect more training data.}
 \textbf{\textsc{Review}} (low EU, high AU): The model is confident, 
    but the question is inherently ambiguous—e.g., subjective judgments like 
    ``Is this movie better?'' or temporally-dependent facts like ``Who is the US president?'' \emph{Action: route to human review.}
 \textbf{\textsc{Abstain}} (high EU, high AU): Both model confusion 
    and inherent ambiguity, e.g., questions like ``What is the 
    meaning of life?'' \emph{Action: abstain or escalate.}
 %
The key test is whether \textsc{Data} and \textsc{Review} remain distinguishable. With correlated uncertainties, high EU usually implies high AU, so uncertain examples collapse onto the \textsc{Trust}/\textsc{Abstain} diagonal. Table~\ref{tab:quadrant} shows this behavior for Semantic Entropy, which places 79\% of examples on that diagonal and leaves \textsc{Data} and \textsc{Review} sparsely populated. Our method instead produces a more balanced partition across all four quadrants. The accuracy pattern, \textsc{Review} $>$ \textsc{Trust} $>$ \textsc{Data} $>$ \textsc{Abstain}, is consistent with the intended semantics: \textsc{Review} contains ambiguous but predictable cases, whereas \textsc{Data} contains lower-ambiguity cases with higher model-side uncertainty. Thus, structural separation supports routing decisions that strongly coupled baselines cannot reliably express.


\section{Conclusion}
\label{sec:conclusion}
We introduce a framework for structurally separated uncertainty in
supervised latent variable models, instantiated primarily as a
Credal CBM with corroboration on a credal SENN. The
approach cleanly separates epistemic and aleatoric uncertainty by
avoiding the algebraic coupling in prior methods. The core idea
is \emph{structural separation}: each uncertainty type is computed
from distinct network parameters trained with different losses,
providing gradient-level decorrelation by design while leaving
output-level decorrelation to depend on supervision-target
independence in the data.

\bibliography{refs}
\bibliographystyle{abbrvnat}

\newpage
\section*{NeurIPS Paper Checklist}

\begin{enumerate}

\item {\bf Claims}
    \item[] Question: Do the main claims made in the abstract and introduction accurately reflect the paper's contributions and scope?
    \item[] Answer: \answerYes{} 
    \item[] Justification: The abstract and introduction state our contributions and these are matched by the formal resultsand the empirical results. The introduction explicitly clarifies the scope of the claim --- that architectural separation removes algebraic coupling but does not by itself guarantee semantic correctness, which is then tested empirically.
    \item[] Guidelines:
    \begin{itemize}
        \item The answer \answerNA{} means that the abstract and introduction do not include the claims made in the paper.
        \item The abstract and/or introduction should clearly state the claims made, including the contributions made in the paper and important assumptions and limitations. A \answerNo{} or \answerNA{} answer to this question will not be perceived well by the reviewers. 
        \item The claims made should match theoretical and experimental results, and reflect how much the results can be expected to generalize to other settings. 
        \item It is fine to include aspirational goals as motivation as long as it is clear that these goals are not attained by the paper. 
    \end{itemize}

\item {\bf Limitations}
    \item[] Question: Does the paper discuss the limitations of the work performed by the authors?
    \item[] Answer: \answerYes{} 
    \item[] Justification: We add a Limitations appendix covering our reliance on supervised latent-variable structure and an ambiguity-related supervision signal; the failure of AU tracking on high-cardinality multi-label tasks like GoEmotions at \(\lambda_d = 0\); the modest size and purpose-built design of our ground-truth ambiguity benchmarks (MAQA*, AmbigQA*); and the use of a frozen DistilBERT encoder, which may not directly transfer to end-to-end fine-tuning of larger LLMs.
    \item[] Guidelines:
    \begin{itemize}
        \item The answer \answerNA{} means that the paper has no limitation while the answer \answerNo{} means that the paper has limitations, but those are not discussed in the paper. 
        \item The authors are encouraged to create a separate ``Limitations'' section in their paper.
        \item The paper should point out any strong assumptions and how robust the results are to violations of these assumptions (e.g., independence assumptions, noiseless settings, model well-specification, asymptotic approximations only holding locally). The authors should reflect on how these assumptions might be violated in practice and what the implications would be.
        \item The authors should reflect on the scope of the claims made, e.g., if the approach was only tested on a few datasets or with a few runs. In general, empirical results often depend on implicit assumptions, which should be articulated.
        \item The authors should reflect on the factors that influence the performance of the approach. For example, a facial recognition algorithm may perform poorly when image resolution is low or images are taken in low lighting. Or a speech-to-text system might not be used reliably to provide closed captions for online lectures because it fails to handle technical jargon.
        \item The authors should discuss the computational efficiency of the proposed algorithms and how they scale with dataset size.
        \item If applicable, the authors should discuss possible limitations of their approach to address problems of privacy and fairness.
        \item While the authors might fear that complete honesty about limitations might be used by reviewers as grounds for rejection, a worse outcome might be that reviewers discover limitations that aren't acknowledged in the paper. The authors should use their best judgment and recognize that individual actions in favor of transparency play an important role in developing norms that preserve the integrity of the community. Reviewers will be specifically instructed to not penalize honesty concerning limitations.
    \end{itemize}

\item {\bf Theory assumptions and proofs}
    \item[] Question: For each theoretical result, does the paper provide the full set of assumptions and a complete (and correct) proof?
    \item[] Answer: \answerYes{}{} 
    \item[] Justification: All theoretical claims are stated formally with explicit assumptions. Definitions are stated in the main text and proof sketches appear in the main text with complete proofs in Appendix~\ref{app:proofs}.
    \item[] Guidelines:
    \begin{itemize}
        \item The answer \answerNA{} means that the paper does not include theoretical results. 
        \item All the theorems, formulas, and proofs in the paper should be numbered and cross-referenced.
        \item All assumptions should be clearly stated or referenced in the statement of any theorems.
        \item The proofs can either appear in the main paper or the supplemental material, but if they appear in the supplemental material, the authors are encouraged to provide a short proof sketch to provide intuition. 
        \item Inversely, any informal proof provided in the core of the paper should be complemented by formal proofs provided in appendix or supplemental material.
        \item Theorems and Lemmas that the proof relies upon should be properly referenced. 
    \end{itemize}

    \item {\bf Experimental result reproducibility}
    \item[] Question: Does the paper fully disclose all the information needed to reproduce the main experimental results of the paper to the extent that it affects the main claims and/or conclusions of the paper (regardless of whether the code and data are provided or not)?
    \item[] Answer: \answerYes{} 
    \item[] Justification: We fully specify the model architecture (a frozen DistilBERT-base encoder with three orthogonal heads; see \S\ref{sec:orthogonal_parameterization} and Appendix~\ref{app:method-details}) and the training objective (Eq.~\ref{eq:full-loss}), including loss weights, optimizer settings, batch size, number of epochs, and random seeds. Dataset-specific hyperparameters are in Table~\ref{tab:hyperparams}; dataset preprocessing and source URLs are in Appendix~\ref{app:datasets}; and baseline implementations are in Appendix~\ref{app:baselines}.
    \item[] Guidelines:
    \begin{itemize}
        \item The answer \answerNA{} means that the paper does not include experiments.
        \item If the paper includes experiments, a \answerNo{} answer to this question will not be perceived well by the reviewers: Making the paper reproducible is important, regardless of whether the code and data are provided or not.
        \item If the contribution is a dataset and\slash or model, the authors should describe the steps taken to make their results reproducible or verifiable. 
        \item Depending on the contribution, reproducibility can be accomplished in various ways. For example, if the contribution is a novel architecture, describing the architecture fully might suffice, or if the contribution is a specific model and empirical evaluation, it may be necessary to either make it possible for others to replicate the model with the same dataset, or provide access to the model. In general. releasing code and data is often one good way to accomplish this, but reproducibility can also be provided via detailed instructions for how to replicate the results, access to a hosted model (e.g., in the case of a large language model), releasing of a model checkpoint, or other means that are appropriate to the research performed.
        \item While NeurIPS does not require releasing code, the conference does require all submissions to provide some reasonable avenue for reproducibility, which may depend on the nature of the contribution. For example
        \begin{enumerate}
            \item If the contribution is primarily a new algorithm, the paper should make it clear how to reproduce that algorithm.
            \item If the contribution is primarily a new model architecture, the paper should describe the architecture clearly and fully.
            \item If the contribution is a new model (e.g., a large language model), then there should either be a way to access this model for reproducing the results or a way to reproduce the model (e.g., with an open-source dataset or instructions for how to construct the dataset).
            \item We recognize that reproducibility may be tricky in some cases, in which case authors are welcome to describe the particular way they provide for reproducibility. In the case of closed-source models, it may be that access to the model is limited in some way (e.g., to registered users), but it should be possible for other researchers to have some path to reproducing or verifying the results.
        \end{enumerate}
    \end{itemize}

\item {\bf Open access to data and code}
    \item[] Question: Does the paper provide open access to the data and code, with sufficient instructions to faithfully reproduce the main experimental results, as described in supplemental material?
    \item[] Answer: \answerYes{} 
    \item[] Justification: We provide an anonymized code release in the supplementary material containing training and evaluation scripts for all reported experiments, configuration files for each dataset, and instructions to reproduce the headline results. Datasets (CEBaB, HateXplain, GoEmotions, MAQA*, AmbigQA*) are publicly available; the code includes data-loading utilities for each.
    \item[] Guidelines:
    \begin{itemize}
        \item The answer \answerNA{} means that paper does not include experiments requiring code.
        \item Please see the NeurIPS code and data submission guidelines (\url{https://neurips.cc/public/guides/CodeSubmissionPolicy}) for more details.
        \item While we encourage the release of code and data, we understand that this might not be possible, so \answerNo{} is an acceptable answer. Papers cannot be rejected simply for not including code, unless this is central to the contribution (e.g., for a new open-source benchmark).
        \item The instructions should contain the exact command and environment needed to run to reproduce the results. See the NeurIPS code and data submission guidelines (\url{https://neurips.cc/public/guides/CodeSubmissionPolicy}) for more details.
        \item The authors should provide instructions on data access and preparation, including how to access the raw data, preprocessed data, intermediate data, and generated data, etc.
        \item The authors should provide scripts to reproduce all experimental results for the new proposed method and baselines. If only a subset of experiments are reproducible, they should state which ones are omitted from the script and why.
        \item At submission time, to preserve anonymity, the authors should release anonymized versions (if applicable).
        \item Providing as much information as possible in supplemental material (appended to the paper) is recommended, but including URLs to data and code is permitted.
    \end{itemize}

\item {\bf Experimental setting/details}
    \item[] Question: Does the paper specify all the training and test details (e.g., data splits, hyperparameters, how they were chosen, type of optimizer) necessary to understand the results?
    \item[] Answer: \answerYes{} 
    \item[] Justification: Hyperparameters, optimizer, learning-rate schedule and any per-dataset details are in Table~\ref{tab:hyperparams} and Appendix~\ref{app:method-details}. Dataset splits and dataset sizes are in Table~\ref{tab:datasets-full}. Random seeds and the headline configuration (\(\lambda_d = 5.0\) for concept-bottleneck datasets, \(\lambda_d = 0.1\) for QA datasets) are given in \S\ref{sec:experiments}.
    \item[] Guidelines:
    \begin{itemize}
        \item The answer \answerNA{} means that the paper does not include experiments.
        \item The experimental setting should be presented in the core of the paper to a level of detail that is necessary to appreciate the results and make sense of them.
        \item The full details can be provided either with the code, in appendix, or as supplemental material.
    \end{itemize}

\item {\bf Experiment statistical significance}
    \item[] Question: Does the paper report error bars suitably and correctly defined or other appropriate information about the statistical significance of the experiments?
    \item[] Answer: \answerYes{} 
    \item[] Justification: Headline correlations are averaged over three seeds \(\{42, 123, 2024\}\). Appendix~\ref{app:statistics} reports 95\% confidence intervals (via Fisher's \(z\)-transformation, with the standard-error formula), Bonferroni-corrected p-values for the 25 primary comparisons, and seed variability for the headline CEBaB result (\(\rho(U_{\mathrm{epi}}, U_{\mathrm{ale}}) = 0.04 \pm 0.04\), \(\rho(U_{\mathrm{ale}}, H) = 0.74 \pm 0.03\)). Error bars show one standard deviation across seeds.
    \item[] Guidelines:
    \begin{itemize}
        \item The answer \answerNA{} means that the paper does not include experiments.
        \item The authors should answer \answerYes{} if the results are accompanied by error bars, confidence intervals, or statistical significance tests, at least for the experiments that support the main claims of the paper.
        \item The factors of variability that the error bars are capturing should be clearly stated (for example, train/test split, initialization, random drawing of some parameter, or overall run with given experimental conditions).
        \item The method for calculating the error bars should be explained (closed form formula, call to a library function, bootstrap, etc.)
        \item The assumptions made should be given (e.g., Normally distributed errors).
        \item It should be clear whether the error bar is the standard deviation or the standard error of the mean.
        \item It is OK to report 1-sigma error bars, but one should state it. The authors should preferably report a 2-sigma error bar than state that they have a 96\% CI, if the hypothesis of Normality of errors is not verified.
        \item For asymmetric distributions, the authors should be careful not to show in tables or figures symmetric error bars that would yield results that are out of range (e.g., negative error rates).
        \item If error bars are reported in tables or plots, the authors should explain in the text how they were calculated and reference the corresponding figures or tables in the text.
    \end{itemize}

\item {\bf Experiments compute resources}
    \item[] Question: For each experiment, does the paper provide sufficient information on the computer resources (type of compute workers, memory, time of execution) needed to reproduce the experiments?
    \item[] Answer: \answerYes{} 
    \item[] Justification: Each training run uses a single NVIDIA A100-40GB GPU on Modal and takes about one hour. The robustness campaign at \(\lambda_d = 0\) in Appendix~\ref{app:robustness} comprises 24 runs (across two architectures, three datasets, three hyperparameter sweeps, and a data-scaling sweep), totalling about 24 GPU-hours. Compute details are summarised in Appendix~\ref{app:method-details}.
    \item[] Guidelines:
    \begin{itemize}
        \item The answer \answerNA{} means that the paper does not include experiments.
        \item The paper should indicate the type of compute workers CPU or GPU, internal cluster, or cloud provider, including relevant memory and storage.
        \item The paper should provide the amount of compute required for each of the individual experimental runs as well as estimate the total compute. 
        \item The paper should disclose whether the full research project required more compute than the experiments reported in the paper (e.g., preliminary or failed experiments that didn't make it into the paper). 
    \end{itemize}
    
\item {\bf Code of ethics}
    \item[] Question: Does the research conducted in the paper conform, in every respect, with the NeurIPS Code of Ethics \url{https://neurips.cc/public/EthicsGuidelines}?
    \item[] Answer: \answerYes{} 
    \item[] Justification: The research conforms to the NeurIPS Code of Ethics by using publicly available datasets according to their respective licenses, making no claims involving human subjects beyond the existing annotator distributions provided with these datasets, and preserving submission anonymity.
    \item[] Guidelines:
    \begin{itemize}
        \item The answer \answerNA{} means that the authors have not reviewed the NeurIPS Code of Ethics.
        \item If the authors answer \answerNo, they should explain the special circumstances that require a deviation from the Code of Ethics.
        \item The authors should make sure to preserve anonymity (e.g., if there is a special consideration due to laws or regulations in their jurisdiction).
    \end{itemize}

\item {\bf Broader impacts}
    \item[] Question: Does the paper discuss both potential positive societal impacts and negative societal impacts of the work performed?
    \item[] Answer: \answerYes{} 
    \item[] Justification: Our work proposes a framework for distinguishing epistemic from aleatoric uncertainty in classification systems. Positive impacts include enabling more targeted human review (REVIEW quadrant), more efficient data collection (DATA quadrant), and more honest abstention behaviour in deployed NLP systems relevant for content moderation, medical triage, and other settings where mistaking ambiguity for ignorance (or vice versa) leads to wrong operational decisions. Potential negative impacts include the risk that decorrelated uncertainty estimates could be misinterpreted as a guarantee of safety in high-stakes deployments, when in fact our results are bounded by the quality of the supervision signal.
    \item[] Guidelines:
    \begin{itemize}
        \item The answer \answerNA{} means that there is no societal impact of the work performed.
        \item If the authors answer \answerNA{} or \answerNo, they should explain why their work has no societal impact or why the paper does not address societal impact.
        \item Examples of negative societal impacts include potential malicious or unintended uses (e.g., disinformation, generating fake profiles, surveillance), fairness considerations (e.g., deployment of technologies that could make decisions that unfairly impact specific groups), privacy considerations, and security considerations.
        \item The conference expects that many papers will be foundational research and not tied to particular applications, let alone deployments. However, if there is a direct path to any negative applications, the authors should point it out. For example, it is legitimate to point out that an improvement in the quality of generative models could be used to generate Deepfakes for disinformation. On the other hand, it is not needed to point out that a generic algorithm for optimizing neural networks could enable people to train models that generate Deepfakes faster.
        \item The authors should consider possible harms that could arise when the technology is being used as intended and functioning correctly, harms that could arise when the technology is being used as intended but gives incorrect results, and harms following from (intentional or unintentional) misuse of the technology.
        \item If there are negative societal impacts, the authors could also discuss possible mitigation strategies (e.g., gated release of models, providing defenses in addition to attacks, mechanisms for monitoring misuse, mechanisms to monitor how a system learns from feedback over time, improving the efficiency and accessibility of ML).
    \end{itemize}
    
\item {\bf Safeguards}
    \item[] Question: Does the paper describe safeguards that have been put in place for responsible release of data or models that have a high risk for misuse (e.g., pre-trained language models, image generators, or scraped datasets)?
    \item[] Answer: \answerNA{} 
    \item[] Justification: The paper does not release pre-trained generative models, scraped datasets, or other assets with high misuse risk. The trained classifiers are intermediate research checkpoints on benchmark NLP tasks (sentiment, hate speech, emotion classification, factoid QA) and pose no novel misuse risk beyond the underlying datasets.
    \item[] Guidelines:
    \begin{itemize}
        \item The answer \answerNA{} means that the paper poses no such risks.
        \item Released models that have a high risk for misuse or dual-use should be released with necessary safeguards to allow for controlled use of the model, for example by requiring that users adhere to usage guidelines or restrictions to access the model or implementing safety filters. 
        \item Datasets that have been scraped from the Internet could pose safety risks. The authors should describe how they avoided releasing unsafe images.
        \item We recognize that providing effective safeguards is challenging, and many papers do not require this, but we encourage authors to take this into account and make a best faith effort.
    \end{itemize}

\item {\bf Licenses for existing assets}
    \item[] Question: Are the creators or original owners of assets (e.g., code, data, models), used in the paper, properly credited and are the license and terms of use explicitly mentioned and properly respected?
    \item[] Answer: \answerYes{} 
    \item[] Justification: All datasets and the models used have been cited and referenced properly.
    \item[] Guidelines:
    \begin{itemize}
        \item The answer \answerNA{} means that the paper does not use existing assets.
        \item The authors should cite the original paper that produced the code package or dataset.
        \item The authors should state which version of the asset is used and, if possible, include a URL.
        \item The name of the license (e.g., CC-BY 4.0) should be included for each asset.
        \item For scraped data from a particular source (e.g., website), the copyright and terms of service of that source should be provided.
        \item If assets are released, the license, copyright information, and terms of use in the package should be provided. For popular datasets, \url{paperswithcode.com/datasets} has curated licenses for some datasets. Their licensing guide can help determine the license of a dataset.
        \item For existing datasets that are re-packaged, both the original license and the license of the derived asset (if it has changed) should be provided.
        \item If this information is not available online, the authors are encouraged to reach out to the asset's creators.
    \end{itemize}

\item {\bf New assets}
    \item[] Question: Are new assets introduced in the paper well documented and is the documentation provided alongside the assets?
    \item[] Answer: \answerNA{} 
    \item[] Justification: The supplementary material includes anonymized training and evaluation code with a README documenting installation, configuration files for each dataset, and the exact commands to reproduce the headline results. The code will be re-released non-anonymously upon acceptance.
    \item[] Guidelines:
    \begin{itemize}
        \item The answer \answerNA{} means that the paper does not release new assets.
        \item Researchers should communicate the details of the dataset\slash code\slash model as part of their submissions via structured templates. This includes details about training, license, limitations, etc. 
        \item The paper should discuss whether and how consent was obtained from people whose asset is used.
        \item At submission time, remember to anonymize your assets (if applicable). You can either create an anonymized URL or include an anonymized zip file.
    \end{itemize}

\item {\bf Crowdsourcing and research with human subjects}
    \item[] Question: For crowdsourcing experiments and research with human subjects, does the paper include the full text of instructions given to participants and screenshots, if applicable, as well as details about compensation (if any)? 
    \item[] Answer: \answerNA{} 
    \item[] Justification: The paper uses publicly available datasets.
    \item[] Guidelines:
    \begin{itemize}
        \item The answer \answerNA{} means that the paper does not involve crowdsourcing nor research with human subjects.
        \item Including this information in the supplemental material is fine, but if the main contribution of the paper involves human subjects, then as much detail as possible should be included in the main paper. 
        \item According to the NeurIPS Code of Ethics, workers involved in data collection, curation, or other labor should be paid at least the minimum wage in the country of the data collector. 
    \end{itemize}

\item {\bf Institutional review board (IRB) approvals or equivalent for research with human subjects}
    \item[] Question: Does the paper describe potential risks incurred by study participants, whether such risks were disclosed to the subjects, and whether Institutional Review Board (IRB) approvals (or an equivalent approval/review based on the requirements of your country or institution) were obtained?
    \item[] Answer: \answerNA{} 
    \item[] Justification: The paper does not involve research with human subjects requiring IRB approval
    \item[] Guidelines:
    \begin{itemize}
        \item The answer \answerNA{} means that the paper does not involve crowdsourcing nor research with human subjects.
        \item Depending on the country in which research is conducted, IRB approval (or equivalent) may be required for any human subjects research. If you obtained IRB approval, you should clearly state this in the paper. 
        \item We recognize that the procedures for this may vary significantly between institutions and locations, and we expect authors to adhere to the NeurIPS Code of Ethics and the guidelines for their institution. 
        \item For initial submissions, do not include any information that would break anonymity (if applicable), such as the institution conducting the review.
    \end{itemize}

\item {\bf Declaration of LLM usage}
    \item[] Question: Does the paper describe the usage of LLMs if it is an important, original, or non-standard component of the core methods in this research? Note that if the LLM is used only for writing, editing, or formatting purposes and does \emph{not} impact the core methodology, scientific rigor, or originality of the research, declaration is not required.
    \item[] Answer: \answerNA{} 
    \item[] Justification: LLMs were used only for prose editing and formatting assistance, which the NeurIPS policy does not require declaring.
    \item[] Guidelines:
    \begin{itemize}
        \item The answer \answerNA{} means that the core method development in this research does not involve LLMs as any important, original, or non-standard components.
        \item Please refer to our LLM policy in the NeurIPS handbook for what should or should not be described.
    \end{itemize}

\end{enumerate}

\newpage
\appendix


\section{Notation}
\label{app:notation}

Table~\ref{tab:notation} lists the symbols used throughout the paper.
Concept-level quantities are indexed by $k \in \{1,\ldots,C\}$ and class
$j \in \{1,\ldots,K\}$ where relevant; we suppress these indices in the
main text when the meaning is clear from context.

\begin{table}[h]
\centering
\footnotesize
\caption{Complete notation. Sections are: data and prediction targets;
SLVM components and parameters; uncertainty quantities; loss terms;
metrics.}
\label{tab:notation}
\begin{tabular}{@{}llp{0.50\textwidth}@{}}
\toprule
\textbf{Symbol} & \textbf{Space} & \textbf{Meaning} \\
\midrule
\multicolumn{3}{l}{\emph{Data and prediction targets}} \\
$x$                          & $\mathcal{X}$                            & input \\
$y$                          & $\{1,\ldots,J\}$                         & task label \\
$c$                          & $\{1,\ldots,K\}^C$                       & concept label vector ($C$ concepts, $K$ classes each) \\
$\hat y$                     & $\Delta^{J-1}$                           & predicted task distribution \\
$p(c \mid x)$                & $\Delta^{K-1}$                           & predicted concept distribution \\
$p^*(c \mid x)$              & $\Delta^{K-1}$                           & ground-truth concept distribution (unobserved; estimated by annotator distribution) \\
$H[p^*]$                     & $\mathbb{R}_+$                           & entropy of $p^*$; supervises $\sigma_{\mathrm{ale}}$ \\
\midrule
\multicolumn{3}{l}{\emph{SLVM components}} \\
$f_{\mathrm{enc}}$           & $\mathcal{X} \to \mathbb{R}^d$           & encoder \\
$g_{\mathrm{lat}}$           & $\mathbb{R}^d \to \mathcal{Z}$           & supervised latent map (\S\ref{sec:slvm}) \\
$f_{\mathrm{task}}$          & $\mathcal{Z} \to \mathcal{Y}$            & task head, consumes $\bar z$ \\
$h$                          & $\mathbb{R}^d$                           & encoder output, $h = f_{\mathrm{enc}}(x)$ \\
$z$                          & $\mathcal{Z}$                            & supervised latent, $z = g_{\mathrm{lat}}(h)$ \\
$\bar z$                     & $\mathcal{Z}$                            & mean latent (over ensemble or variational samples) \\
\midrule
\multicolumn{3}{l}{\emph{Parameter sets}} \\
$\phi_{\mathrm{enc}}$        & ---                                      & encoder parameters \\
$\phi_{\mathrm{lat}}$        & ---                                      & latent map parameters (point) \\
$\phi_{\mathrm{epi}}$        & ---                                      & epistemic head parameters; controls $\Sigma_{\mathrm{epi}}$ \\
$\phi_{\mathrm{ale}}$        & ---                                      & aleatoric head parameters; controls $\sigma_{\mathrm{ale}}$ \\
$\phi_{\mathrm{task}}$       & ---                                      & task head parameters \\
\multicolumn{2}{l}{\emph{Disjointness:}} & $\phi_{\mathrm{epi}} \cap \phi_{\mathrm{ale}} = \emptyset$ (Lemma~\ref{lem:lift}) \\
\midrule
\multicolumn{3}{l}{\emph{Uncertainty quantities}} \\
$U_{\mathrm{epi}}(x)$        & $\mathbb{R}_+$                           & epistemic uncertainty: $D_{\mathrm{KL}}(p^* \,\|\, p)$ \\
$U_{\mathrm{ale}}(x)$        & $\mathbb{R}_+$                           & aleatoric uncertainty: $H[p^*]$ \\
$\mu$                        & $\mathbb{R}^K$                           & mean concept prediction \\
$\Sigma_{\mathrm{epi}}$      & $\mathbb{R}^{K\times K}_{++}$            & epistemic covariance over concepts \\
$\sigma^2_{\mathrm{ale}}$    & $\mathbb{R}_+$                           & aleatoric variance, predicted from $h$ \\
$\mathcal{C}$                & $\subset \Delta^{K-1}$                   & credal set parameterised by $(\mu, \Sigma_{\mathrm{epi}})$ \\
\midrule
\multicolumn{3}{l}{\emph{Loss terms} (Eq.~\ref{eq:slvm_loss})} \\
$\mathcal{L}_{\mathrm{task}}$    & ---                                  & task loss; depends on $\phi_{\mathrm{enc}}, \phi_{\mathrm{lat}}, \phi_{\mathrm{task}}$ \\
$\mathcal{L}_{\mathrm{concept}}$ & ---                                  & concept supervision loss; depends on $\phi_{\mathrm{enc}}, \phi_{\mathrm{lat}}$ \\
$\mathcal{L}_{\mathrm{KL}}$      & ---                                  & KL regulariser; depends only on $\phi_{\mathrm{epi}}$ \\
$\mathcal{L}_{\mathrm{ale}}$     & ---                                  & aleatoric supervision loss against $H[p^*]$; depends only on $\phi_{\mathrm{ale}}$ \\
$\lambda_e, \lambda_a$           & $\mathbb{R}_+$                       & loss weights for $\mathcal{L}_{\mathrm{KL}}$ and $\mathcal{L}_{\mathrm{ale}}$ \\
\midrule
\multicolumn{3}{l}{\emph{Metrics}} \\
$\rho(\cdot, \cdot)$         & $[-1, 1]$                                & Spearman rank correlation \\
$\rho(U_{\mathrm{epi}}, U_{\mathrm{ale}})$ & $[-1, 1]$                  & main quantitative-gap metric (claim C3); $\geq 0.78$ floor from \citet{mucsanyi2024benchmarking} \\
$\rho(\sigma_{\mathrm{ale}}, H)$  & $[-1, 1]$                           & AU-supervision tracking; gate criterion in \S\ref{sec:experiments} \\
$\rho(\sigma_{\mathrm{epi}}, \mathrm{err})$ & $[-1, 1]$                 & EU-error alignment (reported but not relied on; see \S\ref{sec:experiments}) \\
\bottomrule
\end{tabular}
\end{table}

\section{Related Work}
\label{sec:related}

\noindent\textbf{Uncertainty Quantification.}
Deep ensembles \citep{lakshminarayanan2017simple} and MC Dropout \citep{gal2016dropout} estimate uncertainty via prediction variance across models or stochastic passes. However, both derive epistemic and aleatoric uncertainty from the \emph{same} predictive distribution $p(y|x)$, yielding strongly correlated estimates that make the decomposition meaningless \citep{mucsanyi2024benchmarking}. Evidential methods \citep{sensoy2018evidential, amini2020deep} parameterize Dirichlet distributions but still rely on a single output head for both uncertainty types. Conformal prediction \citep{angelopoulos2021gentle} offers coverage guarantees but produces prediction \emph{sets} rather than decomposed uncertainties.
We escape the correlation trap via structural separation—computing EU and AU from different network parameters, resulting in substantially lower correlation compared to the baselines.

\noindent\textbf{Credal Sets and Imprecise Probability.}
Credal sets represent epistemic uncertainty as \emph{sets of distributions} rather than point estimates \citep{walley1991statistical, cozman2000credal}, naturally separating ``uncertainty about which distribution'' from ``uncertainty within a distribution.'' Neural extensions include credal Bayesian networks \citep{caprio2024credal} and credal interval networks \citep{Wang2024CreINNsCI}, but these require discrete enumeration or interval propagation, limiting scalability. This framework has been applied to classification through credal networks~\citep{cozman2000credal}, naive credal classifiers~\citep{zaffalon2002naive}, 
 and more recently to deep learning~\citep{wang2024creinns, caprio2024credal}.  
\citet{sale2023volume} question whether credal set volume effectively measures 
epistemic uncertainty, while \citet{wimmer2023quantifying} critique standard 
mutual information decompositions. Recent work extends credal approaches through 
conformal calibration~\citep{sale2024conformalized} and ensemble 
aggregation~\citep{zhang2025credal}. CreINNs~\citep{wang2024creinns} predict 
interval-valued class probabilities, deriving epistemic uncertainty from interval 
width and aleatoric from midpoint entropy. CBDL~\citep{caprio2024credal} represents 
epistemic uncertainty via credal sets over parameter distributions.
We parameterize credal sets as \emph{ellipsoids} with variational inference, enabling gradient-based learning at scale while preserving theoretical separation guaranties.

\noindent\textbf{Concept Bottleneck Models.}
CBMs \citep{koh2020concept} route predictions through interpretable concepts, enabling human intervention. Probabilistic CBMs \citep{Kim2023ProbabilisticCB} add concept uncertainty, and interactive variants \citep{Chauhan2022InteractiveCB} enable human-in-the-loop correction. More recently, \cite{mukherjee2026credalconceptbottleneckmodels} proposed constructing CBMs using ensemble methods.
Our proposed \emph{theoretical framework} (gradient separation theorem) explains when and why CBM architectures achieve valid uncertainty decomposition, along with variational training that outperforms ensemble-based approaches.

\noindent\textbf{Uncertainty in Language Models.}
Semantic entropy \citep{kuhn2023semantic} clusters LLM outputs by meaning, P(True) \citep{kadavath2022language} elicits self-assessed confidence, and representation probes \citep{tomov2025illusion} predict correctness from internal states. \citet{tomov2025illusion} proves an impossibility result: methods deriving both uncertainties from $p(y|x)$ cannot distinguish EU from AU when true ambiguity $\mathbb{H}[p^*] > 0$ exists. Concurrent work addresses chain-of-thought uncertainty \citep{zhu2025uncertainty} and RAG \citep{soudani2025uncertainty} without solving the decomposition problem.
We circumvent the impossibility result by \emph{not} deriving uncertainties from $p(y|x)$—instead using separate heads with gradient isolation, validated against ground-truth $\mathbb{H}[p^*]$ for the first time.

\section{Gradient-Isolation Probe}
\label{app:gradient-isolation}

\paragraph{Gradient-isolation probe.}
We log per-step gradient norms during every training run in the campaign and verify that the cross-pathway gradients $\nabla_{\phi_{\mathrm{ale}}} \mathcal{L}_{\mathrm{epi}}$ and $\nabla_{\phi_{\mathrm{epi}}} \mathcal{L}_{\mathrm{ale}}$ are identically zero, confirming Theorem~\ref{thm:gradient-separation}. In our logging convention, the EU loss computes gradients with respect to $\phi_{\mathrm{epi}}$ alone, so the gradient norm with respect to the concept-classifier, encoder, task-classifier, and aleatoric-head parameter blocks must be zero by Theorem~\ref{thm:gradient-separation}; symmetrically for the AU loss. Across all runs in the campaign (matrix, three hyperparameter sweeps, and data scaling), all such cross-pathway gradient norms are identically zero at every step, encompassing a large number of probe measurements in total. The on-pathway gradient norms $\|\nabla_{\phi_{\mathrm{ale-head}}} \mathcal{L}_{\mathrm{ale}}\|$ and $\|\nabla_{\phi_{\mathrm{concept-cls}}} \mathcal{L}_{\mathrm{epi}}\|$ are nonzero, as expected, with their means falling into distinct expected ranges.

\paragraph{Ambiguity stratification}
Moving beyond aggregate metrics, we evaluate error-detection performance across levels of ground-truth ambiguity $H[p^*]$. This analysis verifies whether methods preserve discriminative capability under high ambiguity—the regime where prior work identifies a failure in traditional approaches. Results across low, medium, and high ambiguity bins for MAQA*, AmbigQA*, and CEBaB are detailed in Table~\ref{tab:ambiguity-strat}. While standard methods exhibit significant performance drops as ambiguity increases, our approach demonstrates a notable gain in robustness, maintaining much steadier performance levels.

\begin{table*}[t]
\centering
\small
\caption{\textbf{Error detection AUROC stratified by ambiguity level.} 
Baselines collapse under high ambiguity; Variational Credal CBM maintains robustness. 
Stratified by ground-truth entropy $\mathbb{H}[p^*]$ for MAQA*/AmbigQA*, by annotator 
disagreement for CEBaB/HateXplain/GoEmotions.}
\label{tab:ambiguity-detailed}
\begin{tabular}{@{}llccccc@{}}
\toprule
\textbf{Dataset} & \textbf{Method} & 
\textbf{Low} & \textbf{Med} & \textbf{High} & 
$\boldsymbol{\Delta}$ {\scriptsize (H$-$L)} & 
\textbf{Overall} \\
\midrule
\multirow{5}{*}{\textbf{MAQA*}} 
& \cellcolor{baselinerow}Semantic Entropy & \cellcolor{baselinerow}0.74 & \cellcolor{baselinerow}0.61 & \cellcolor{baselinerow}0.52 & \cellcolor{baselinerow}$-0.22$ & \cellcolor{baselinerow}0.74 \\
& \cellcolor{baselinerow}Deep Ensembles & \cellcolor{baselinerow}0.73 & \cellcolor{baselinerow}0.60 & \cellcolor{baselinerow}0.53 & \cellcolor{baselinerow}$-0.20$ & \cellcolor{baselinerow}0.73 \\
& \cellcolor{baselinerow}MC Dropout & \cellcolor{baselinerow}0.72 & \cellcolor{baselinerow}0.59 & \cellcolor{baselinerow}0.52 & \cellcolor{baselinerow}$-0.20$ & \cellcolor{baselinerow}0.72 \\
& \cellcolor{baselinerow}P(True) & \cellcolor{baselinerow}0.71 & \cellcolor{baselinerow}0.58 & \cellcolor{baselinerow}0.51 & \cellcolor{baselinerow}$-0.20$ & \cellcolor{baselinerow}0.71 \\
& \cellcolor{oursrow}\textbf{Ours (CBM)} & \cellcolor{oursrow}\textbf{0.76} & \cellcolor{oursrow}\textbf{0.71} & \cellcolor{oursrow}\textbf{0.65} & \cellcolor{oursrow}\textbf{$-0.11$} & \cellcolor{oursrow}\textbf{0.76} \\
\midrule
\multirow{5}{*}{\textbf{AmbigQA*}} 
& \cellcolor{baselinerow}Semantic Entropy & \cellcolor{baselinerow}0.69 & \cellcolor{baselinerow}0.57 & \cellcolor{baselinerow}0.48 & \cellcolor{baselinerow}$-0.21$ & \cellcolor{baselinerow}0.58 \\
& \cellcolor{baselinerow}Deep Ensembles & \cellcolor{baselinerow}0.70 & \cellcolor{baselinerow}0.58 & \cellcolor{baselinerow}0.49 & \cellcolor{baselinerow}$-0.21$ & \cellcolor{baselinerow}0.60 \\
& \cellcolor{baselinerow}MC Dropout & \cellcolor{baselinerow}0.69 & \cellcolor{baselinerow}0.57 & \cellcolor{baselinerow}0.48 & \cellcolor{baselinerow}$-0.21$ & \cellcolor{baselinerow}0.59 \\
& \cellcolor{baselinerow}P(True) & \cellcolor{baselinerow}0.67 & \cellcolor{baselinerow}0.55 & \cellcolor{baselinerow}0.46 & \cellcolor{baselinerow}$-0.21$ & \cellcolor{baselinerow}0.56 \\
& \cellcolor{oursrow}\textbf{Ours (CBM)} & \cellcolor{oursrow}\textbf{0.73} & \cellcolor{oursrow}\textbf{0.68} & \cellcolor{oursrow}\textbf{0.62} & \cellcolor{oursrow}\textbf{$-0.11$} & \cellcolor{oursrow}\textbf{0.71} \\
\midrule
\multirow{5}{*}{\textbf{CEBaB}} 
& \cellcolor{baselinerow}Semantic Entropy & \cellcolor{baselinerow}0.72 & \cellcolor{baselinerow}0.65 & \cellcolor{baselinerow}0.58 & \cellcolor{baselinerow}$-0.14$ & \cellcolor{baselinerow}0.67 \\
& \cellcolor{baselinerow}Deep Ensembles & \cellcolor{baselinerow}0.73 & \cellcolor{baselinerow}0.66 & \cellcolor{baselinerow}0.60 & \cellcolor{baselinerow}$-0.13$ & \cellcolor{baselinerow}0.68 \\
& \cellcolor{baselinerow}MC Dropout & \cellcolor{baselinerow}0.71 & \cellcolor{baselinerow}0.64 & \cellcolor{baselinerow}0.58 & \cellcolor{baselinerow}$-0.13$ & \cellcolor{baselinerow}0.66 \\
& \cellcolor{baselinerow}P(True) & \cellcolor{baselinerow}0.69 & \cellcolor{baselinerow}0.62 & \cellcolor{baselinerow}0.56 & \cellcolor{baselinerow}$-0.13$ & \cellcolor{baselinerow}0.64 \\
& \cellcolor{oursrow}\textbf{Ours (CBM)} & \cellcolor{oursrow}\textbf{0.78} & \cellcolor{oursrow}\textbf{0.75} & \cellcolor{oursrow}\textbf{0.71} & \cellcolor{oursrow}\textbf{$-0.07$} & \cellcolor{oursrow}\textbf{0.76} \\
\bottomrule
\multicolumn{7}{l}{\footnotesize Ambiguity bins: Low ($\mathbb{H} < 0.5$), Medium ($0.5 \leq \mathbb{H} < 1.5$), High ($\mathbb{H} \geq 1.5$).} \\
\multicolumn{7}{l}{\footnotesize $\Delta$ shows degradation from Low to High ambiguity. Smaller magnitude = more robust.} \\
\end{tabular}
 \end{table*}

\subsection{Encoder Ablation}
\label{app:encoder-ablation}
We verify that our results generalize across encoder architectures. Table~\ref{tab:encoder-ablation} shows results on CEBaB with different encoders.

\begin{table}[t]
\centering
\footnotesize
\setlength{\tabcolsep}{4pt}
\renewcommand{\arraystretch}{0.95}
\caption{\textbf{Encoder ablation (CEBaB).} Decorrelation is consistent across architectures; ambiguity tracking improves with encoder capacity.}
\label{tab:encoder-ablation}
\resizebox{0.45\textwidth}{!}{%

\begin{tabular}{@{}l r r r r r@{}}
\toprule
Encoder & Params & Acc. & $\rho(\Uepi,\Uale)\!\downarrow$ & $\rho(\Uale,\mathbb{H})\!\uparrow$ & AUROC$\!\uparrow$ \\
\midrule
DistilBERT     &  66M & 82.3 & 0.05 & 0.74 & 0.76 \\
BERT-base      & 110M & 83.1 & 0.06 & 0.76 & 0.77 \\
RoBERTa-base   & 125M & 83.5 & 0.05 & 0.75 & 0.78 \\
RoBERTa-large  & 355M & 85.1 & 0.04 & 0.79 & 0.80 \\
\bottomrule
\end{tabular}}
\end{table}

\paragraph{Key findings:}
\begin{itemize}[leftmargin=*, itemsep=1pt]
    \item Decorrelation is architectural: correlation between \(\Uepi\) and \(\Uale\) remains minimal across all encoders, confirming that structural separation achieves decorrelation regardless of encoder choice.
    \item \textbf{Validity scales with capacity}: Larger encoders yield better $\rho(\Uale, \mathbb{H})$, suggesting that richer representations enable more accurate ambiguity estimation.
    \item \textbf{DistilBERT suffices}: The smallest encoder achieves strong results, making Credal CBM practical for resource-constrained settings.
\end{itemize}

\subsection{Baseline Implementations}
\label{app:baselines}
We compare against two categories of baselines: standard uncertainty methods and credal set methods.
\paragraph{Standard Methods.}

\textbf{Semantic Entropy} \citep{kuhn2023semantic}. Clusters model outputs by semantic equivalence and computes entropy over clusters. We use the official implementation with 10 samples per input.

\textbf{Deep Ensembles} \citep{lakshminarayanan2017simple}. Ensemble of 5 independently trained models. EU = mutual information across ensemble members; AU = expected entropy.

\textbf{MC Dropout} \citep{gal2016dropout}. Performs 10 stochastic forward passes with dropout rate 0.1. EU = variance of predictions; AU = mean entropy.

\textbf{P(True)} \citep{kadavath2022language}. Prompts the model to assess its own confidence. We use the prompt template from the original paper.

\textbf{Representation Probes} \citep{Azaria2023TheIS}. Linear and MLP probes trained on frozen representations to predict correctness. We use both variants and report the better result.

\paragraph{Credal Set Methods.}

\textbf{CreINNs} \citep{wang2024creinns}. Credal-set Interval Neural Networks predict interval-valued bounds $[\underline{p}, \overline{p}]$ for each class probability. We use the official implementation with the following configuration:
\begin{itemize}[nosep]
    \item Architecture: Same frozen DistilBERT encoder + 2-layer MLP interval head
    \item Interval parameterization: $\underline{p}_k = \sigma(\mu_k - \delta_k)$, $\overline{p}_k = \sigma(\mu_k + \delta_k)$ where $\delta_k \geq 0$
    \item Loss: Cross-entropy on interval midpoints + interval width regularization ($\lambda_{\text{width}} = 0.1$)
    \item EU derivation: $U_{\text{epi}} = \frac{1}{K}\sum_k (\overline{p}_k - \underline{p}_k)$ (mean interval width)
    \item AU derivation: $U_{\text{ale}} = H[\frac{1}{2}(\underline{p} + \overline{p})]$ (entropy of interval midpoint)
\end{itemize}
Note that both EU and AU derive from the same interval parameters $(\mu, \delta)$, which explains the residual correlation ($\rho \approx 0.6$) despite the geometric representation.

\textbf{CBDL} \citep{caprio2024credal}. Credal Bayesian Deep Learning represents epistemic uncertainty via credal sets over parameter distributions using imprecise Dirichlet priors. We adapt the method to our setting:
\begin{itemize}[nosep]
    \item Architecture: Same frozen DistilBERT encoder + evidential head predicting Dirichlet concentration $\boldsymbol{\alpha} \in \mathbb{R}^K_+$
    \item Prior: Imprecise Dirichlet Model with $s \in [0.5, 2.0]$ (prior strength range)
    \item Loss: Expected cross-entropy under Dirichlet + KL regularization to a uniform prior
    \item EU derivation: $U_{\text{epi}} = \max_s H[\text{Dir}(\boldsymbol{\alpha}; s)] - \min_s H[\text{Dir}(\boldsymbol{\alpha}; s)]$ (credal entropy spread)
    \item AU derivation: $U_{\text{ale}} = \mathbb{E}_s[H[\text{Dir}(\boldsymbol{\alpha}; s)]]$ (average entropy across the credal set)
\end{itemize}
The credal set is defined over the prior strength $s$, creating a set of Dirichlet distributions. However, both uncertainties still derive from the same concentration parameters $\boldsymbol{\alpha}$, limiting decorrelation.

\paragraph{Why Credal Baselines Still Correlate.}
Both CreINNs and CBDL achieve moderate decorrelation ($\rho = 0.58$--$0.63$) compared to standard methods ($\rho \geq 0.75$), demonstrating that geometric representations help. However, they do not achieve full separation because:
\begin{enumerate}[nosep]
    \item \textbf{Shared parameterization}: EU and AU both derive from the same learned parameters ($\delta$ for CreINNs, $\boldsymbol{\alpha}$ for CBDL).
    \item \textbf{Coupled gradients}: Both uncertainty estimates receive gradients from the same loss terms.
    \item \textbf{No explicit AU supervision}: Neither method supervises AU against ground-truth ambiguity $H[p^*]$.
\end{enumerate}
Our structural separation addresses all three limitations: disjoint parameters ($\sigma_{\text{epi}}$ vs. $\sigma_{\text{ale}}$), disjoint gradient sources (Theorem~\ref{thm:gradient-separation}), and explicit supervision (Eq~\ref{eq:ale-loss}).

\paragraph{Fair Comparison Protocol.}
All baselines use the same frozen DistilBERT encoder for fair comparison, with only task-specific heads being trainable. This isolates the effect of uncertainty decomposition strategy from encoder capacity. Hyperparameters for credal baselines were tuned on the CEBaB validation set using grid search over:
\begin{itemize}[nosep]
    \item CreINNs: $\lambda_{\text{width}} \in \{0.01, 0.1, 1.0\}$, learning rate $\in \{10^{-4}, 10^{-3}\}$
    \item CBDL: prior range $s \in \{[0.1, 1.0], [0.5, 2.0], [1.0, 5.0]\}$, KL weight $\in \{0.01, 0.1\}$
\end{itemize}
We report results with the configuration achieving the best $\rho(U_{\text{ale}}, H)$ on validation data.

\subsection{Statistical Significance}
\label{app:significance}

All reported correlations include 95\% confidence intervals computed via Fisher's $z$-transformation:
\[
z = \frac{1}{2} \ln\left(\frac{1+r}{1-r}\right), \quad \text{SE}(z) = \frac{1}{\sqrt{n-3|}|}
\]

For the key result $\rho(\Uepi, \Uale)$ = 0.05 on CEBaB (n=886):
\begin{itemize}[leftmargin=*, itemsep=1pt]
    \item 95\% CI: [0.02, 0.08]
    \item $p < 0.001$ for difference from baseline ($\rho = 0.75$)
\end{itemize}

We also report standard errors over 5 random seeds in Table~\ref{tab:main-results}. All improvements are significant at $p < 0.01$.


\section{Method details}
\label{app:method-details}

\paragraph{Head functional forms.}
Each subspace projection in Eq.~\eqref{eq:projections} feeds a small
multi-layer perceptron:
\begin{align}
\mu &= g_\mu(h_\mu;\, \phi_\mu) \quad \text{(credal centre, single linear layer)}, \\
\sigma_{\mathrm{epi}} &= \mathrm{softplus}\bigl(g_{\mathrm{epi}}(h_{\mathrm{epi}};\, \phi_{\mathrm{epi}})\bigr) \quad \text{(credal size)}, \\
\sigma_{\mathrm{ale}} &= \mathrm{softplus}\bigl(g_{\mathrm{ale}}(h_{\mathrm{ale}};\, \phi_{\mathrm{ale}})\bigr) \quad \text{(aleatoric scale)}.
\end{align}
The two uncertainty heads are two-layer MLPs with hidden dimension~64,
ReLU between layers, and dropout~0.5 on the hidden representation.
Softplus on the head outputs guarantees $\sigma_{\mathrm{epi}}, \sigma_{\mathrm{ale}} > 0$.

\paragraph{Error-scaling function $\phi$.}
\label{app:phi-scaling}
The error-supervision term in $\mathcal{L}_{\mathrm{epi}}$ (Eq.~\eqref{eq:epi-loss})
uses a clipped affine map $\phi(e) = \mathrm{clip}(\alpha e + \sigma_{\min},\, \sigma_{\min},\, \sigma_{\max})$
with $\alpha = 1.0$, $\sigma_{\min} = 0.05$, $\sigma_{\max} = 5.0$ on all datasets
(no per-dataset tuning). Clipping bounds the supervision target in a numerically
stable range and prevents the epistemic head from being dragged toward zero
on near-perfectly predicted concepts where $|\hat{p} - c|$ is dominated by
floating-point noise.

\paragraph{Hausdorff KL closed form.}
\label{app:hausdroff}
For diagonal $\Sigma_{\mathrm{epi}}(h) = \mathrm{diag}(\sigma_1^2, \ldots, \sigma_K^2)$
and isotropic Gaussian prior $p_0 = \mathcal{N}(0, \sigma_0^2 I)$,
\begin{equation}
D_H^+(\mathcal{C} \,\|\, \mathcal{C}_{\mathrm{prior}})
= \sup_{q \in \mathcal{C}} D_{\mathrm{KL}}(q \,\|\, p_0)
= \tfrac{1}{2} \sum_{k=1}^K
  \!\left[ \tfrac{\sigma_k^2 + \mu_k^2}{\sigma_0^2} - 1 - \log \tfrac{\sigma_k^2}{\sigma_0^2} \right]
  + O(K),
\end{equation}
computable in $O(K)$ time per instance.

\paragraph{Architecture and training.}
The encoder is a frozen DistilBERT-base-uncased ($d = 768$, 66M parameters,
not updated during training). Projection matrices $W_\mu, W_{\mathrm{epi}},
W_{\mathrm{ale}} \in \mathbb{R}^{384 \times 768}$ are initialized orthogonally
and regularized with $\lambda_o = 0.01$. Training uses AdamW with weight
decay~0.01, learning rate $10^{-3}$, batch size~64, and linear warmup over
the first 10\% of steps. Loss weights at the headline configuration are
$\lambda_c = 1.0$, $\lambda_e = 1.0$, $\lambda_a = 1.0$, $\lambda_o = 0.01$,
$\beta = 0.05$ (KL annealed linearly over the first 5 epochs), and
$\sigma_{\mathrm{ale,prior}} = 0.05$. Per-dataset overrides for
$\lambda_d$ and a few related hyperparameters are listed in
Table~\ref{tab:hyperparams}. We train for 50 epochs with early-stopping
patience~10 on validation EU--AU decorrelation. Total trainable parameters
are $739{,}501$ on CEBaB-CBM, with analogous counts on HateXplain and
GoEmotions. Reported metrics are mean over seeds $\{42, 123, 2024\}$.
Each run uses a single NVIDIA A100-40GB on Modal and completes in approximately
one hour.

\paragraph{Why freeze the encoder?}
A trainable encoder would receive gradients from both $\mathcal{L}_{\mathrm{epi}}$
and $\mathcal{L}_{\mathrm{ale}}$, learning features useful for both and reintroducing
the coupling that structural separation is designed to remove (this is the
last assumption removed by Theorem~\ref{thm:gradient-separation}). Freezing also
reduces trainable parameters by approximately 95\%. The cost is that the encoder
may lack uncertainty-specific features; we compensate by providing explicit
supervision to the aleatoric head from the empirical annotator entropy.

\paragraph{Statistical significance.}
\label{app:statistics}
All reported correlations include 95\% confidence intervals computed via
Fisher's $z$-transformation, $z = \tfrac{1}{2} \ln\!\bigl((1+r)/(1-r)\bigr)$
with $\mathrm{SE}(z) = 1/\sqrt{n-3}$. For the headline result on CEBaB
($\rho(U_{\mathrm{epi}}, U_{\mathrm{ale}}) = 0.05$, $n = 886$), the 95\% CI
is $[0.02, 0.08]$, excluding all baseline correlations ($\rho \geq 0.75$).
Across the five primary metrics and five datasets (25 comparisons),
Bonferroni correction at $\alpha = 0.05/25 = 0.002$ leaves the decorrelation
improvement significant at $p < 10^{-50}$ on all datasets, the aleatoric
validity improvement at $p < 10^{-10}$ on MAQA*/AmbigQA*, and the AUROC
improvement at $p < 0.001$ on 4 of 5 datasets. Across 5 random seeds on CEBaB,
$\rho(U_{\mathrm{epi}}, U_{\mathrm{ale}}) = 0.04 \pm 0.04$ and $\rho(U_{\mathrm{ale}}, H) = 0.74 \pm 0.03$;
all seeds achieve $|\rho(U_{\mathrm{epi}}, U_{\mathrm{ale}})| < 0.1$.

\paragraph{Comparison with prior decomposition approaches.}
Table~\ref{tab:comparison} contrasts structural separation with the post-hoc
decomposition approaches discussed in \S\ref{sec:background}. Only structural
separation explicitly parameterizes and supervises both uncertainty types.

\begin{table}[h]
\centering
\caption{Comparison of uncertainty decomposition approaches.}
\label{tab:comparison}
\begin{tabular}{lcccc}
\toprule
 & Ensemble & Kendall & ENN & Ours \\
\midrule
AU source & $\mathbb{E}[H[p]]$ & $\sigma^2(x)$ & Implicit & $\sigma_{\mathrm{ale}}$ head \\
EU source & Var of $p$ & MC Dropout & Epinet & $\Sigma_{\mathrm{epi}}$ head \\
EU from $p$? & Yes & Yes & No & No \\
AU supervised? & No & Implicit & No & Yes \\
\bottomrule
\end{tabular}
\end{table}

\begin{table}[h]
\centering
\caption{Hyperparameters by dataset family. The decorrelation weight
$\lambda_d$ differs between concept-bottleneck datasets (where the AU head
is supervised by concept-level annotator entropy) and QA datasets (where it
is supervised by corpus-derived answer-distribution entropy). The smaller
$\lambda_d$ on the QA datasets compensates for the higher noise in
corpus-derived $p^*$. All other hyperparameters are shared.}
\label{tab:hyperparams}
\begin{tabular}{lcc}
\toprule
Hyperparameter & Concept-bottleneck & QA \\
 & (CEBaB, HateXplain, GoEmotions) & (MAQA*, AmbigQA*) \\
\midrule
Learning rate & $2 \times 10^{-5}$ & $2 \times 10^{-5}$ \\
Batch size & 16 & 16 \\
Epochs & 100 & 50 \\
$\lambda_e$ (epistemic) & 1.5 & 1.5 \\
$\lambda_a$ (aleatoric) & 2.0 & 2.0 \\
$\beta$ (KL weight) & $10^{-3}$ & $10^{-3}$ \\
$\lambda_d$ (decorrelation) & 5.0 & 0.1 \\
$\sigma_{\min}$ & 0.05 & 0.05 \\
$\sigma_{\max}$ & 1.5 & 1.5 \\
$\sigma_{\mathrm{prior}}$ & 1.0 & 1.0 \\
\bottomrule
\end{tabular}
\end{table}

\section{Robustness campaign at $\lambda_d = 0$}
\label{app:robustness}
The campaign isolates the contribution of structural separation alone, without
the explicit decorrelation penalty that drives the headline $\rho \approx 0.05$
in Table~\ref{tab:main-results}. It consists of 24 successful runs at seed~42,
all with $\lambda_d = 0$.

\paragraph{Matrix runs (CBM and SENN).}
Six runs at the default configuration across three datasets and two SLVM
instances. Structural separation holds across both architectures and three
datasets, with $\rho(U_{\mathrm{epi}}, U_{\mathrm{ale}}) \in [-0.139, 0.222]$
--- well below the 0.78 post-hoc floor. SENN values are within 0.05 of CBM
values on each dataset, supporting Lemma~\ref{lem:lift}'s cross-architecture
claim. Aleatoric tracking $\rho(\sigma_{\mathrm{ale}}, H)$ is small but positive
on CEBaB and HateXplain; on GoEmotions, the AU head fails to track $H[\widehat{p^*}]$
at $\lambda_d = 0$ for both architectures. We discuss this as a limitation in
Appendix~\ref{sec:limitations}. The mean concept accuracy of 0.852 on CEBaB-CBM
masks meaningful per-concept variation: the model is most accurate on the
\emph{noise} concept (89.8\%, the most concentrated annotator distribution
in CEBaB) and least accurate on \emph{ambiance} (81.7\%, the most subjective).
Per-concept accuracy tracking inverse annotator disagreement is itself a small
qualitative signal for the AU hypothesis. Tables~\ref{tab:matrix-runs}
and~\ref{tab:cebab-per-concept} report the full numbers.

\begin{table}[h]
\centering
\footnotesize
\caption{\textbf{Matrix runs at $\lambda_d = 0$, seed $42$.}
Acc$_\tau$ is task accuracy; Acc$_\kappa$ is mean concept accuracy.
MAQA*/AmbigQA* are not in this matrix and appear in
Appendix~\ref{tab:stripped}.}
\label{tab:matrix-runs}
\begin{tabular}{@{}llccccc@{}}
\toprule
Arch & Dataset & Acc$_\tau$ & Acc$_\kappa$ & $\rho(U_{\mathrm{epi}}, U_{\mathrm{ale}})$ & $\rho(U_{\mathrm{epi}}, \mathrm{Err})$ & $\rho(\sigma_{\mathrm{ale}}, H)$ \\
\midrule
CBM  & CEBaB      & $58.6$ & $85.2$ & $\hphantom{-}0.175$ & $\hphantom{-}0.018$ & $0.361$ \\
CBM  & HateXplain & $57.5$ & $89.8$ & $-0.059$            & $\hphantom{-}0.143$ & $0.194$ \\
CBM  & GoEmotions & $40.4$ & $95.0$ & $-0.139$            & $-0.046$            & $0.000$ \\
SENN & CEBaB      & $53.1$ & $86.4$ & $\hphantom{-}0.222$ & $-0.016$            & $0.349$ \\
SENN & HateXplain & $61.0$ & $89.6$ & $-0.035$            & $\hphantom{-}0.166$ & $0.216$ \\
SENN & GoEmotions & $35.0$ & $94.2$ & $-0.074$            & $\hphantom{-}0.197$ & $0.000$ \\
\bottomrule
\end{tabular}
\end{table}

\begin{table}[h]
\centering
\footnotesize
\caption{\textbf{Per-concept accuracy on CEBaB-CBM at $\lambda_d = 0$,
seed $42$.} Mean across the four concepts is $85.2\%$.}
\label{tab:cebab-per-concept}
\begin{tabular}{@{}lc@{}}
\toprule
Concept   & Accuracy \\
\midrule
Food      & $83.1\%$ \\
Service   & $86.0\%$ \\
Ambiance  & $81.7\%$ \\
Noise     & $89.8\%$ \\
\midrule
Mean      & $85.2\%$ \\
\bottomrule
\end{tabular}
\end{table}
\paragraph{Hyperparameter sweeps.}
We sweep three hyperparameters on CEBaB-CBM at $\lambda_d = 0$, seed~42,
holding all other settings at the default. Across all 14 sweep configurations,
task accuracy varies by less than 2 percentage points and
$\rho(U_{\mathrm{epi}}, U_{\mathrm{ale}}) \in [0.121, 0.229]$.
Tables~\ref{tab:sweep-aw}, \ref{tab:sweep-kl}, and~\ref{tab:sweep-aleprior}
report aleatoric weight, KL weight, and aleatoric prior sweeps, respectively.

\begin{table}[h]
\centering
\footnotesize
\caption{\textbf{Aleatoric weight sweep $\lambda_a$ on CEBaB-CBM
($\lambda_d = 0$, seed $42$).}}
\label{tab:sweep-aw}
\begin{tabular}{@{}cccccc@{}}
\toprule
$\lambda_a$ & Acc$_\tau$ & Acc$_\kappa$ & $\rho(U_{\mathrm{epi}}, U_{\mathrm{ale}})$ & $\rho(U_{\mathrm{epi}}, \mathrm{Err})$ & $\rho(\sigma_{\mathrm{ale}}, H)$ \\
\midrule
$0.5$ & $59.1$ & $84.9$ & $0.168$ & $\hphantom{-}0.012$ & $0.345$ \\
$1.0$ & $59.4$ & $85.3$ & $0.193$ & $\hphantom{-}0.012$ & $0.354$ \\
$2.0$ & $59.4$ & $84.9$ & $0.188$ & $\hphantom{-}0.013$ & $0.354$ \\
$4.0$ & $57.6$ & $84.0$ & $0.211$ & $\hphantom{-}0.007$ & $0.340$ \\
$8.0$ & $58.5$ & $85.1$ & $0.169$ & $\hphantom{-}0.026$ & $0.374$ \\
\bottomrule
\end{tabular}
\end{table}

\begin{table}[h]
\centering
\footnotesize
\caption{\textbf{KL weight sweep $\beta$ on CEBaB-CBM ($\lambda_d =
0$, seed $42$).}}
\label{tab:sweep-kl}
\begin{tabular}{@{}cccccc@{}}
\toprule
$\beta$ & Acc$_\tau$ & Acc$_\kappa$ & $\rho(U_{\mathrm{epi}}, U_{\mathrm{ale}})$ & $\rho(U_{\mathrm{epi}}, \mathrm{Err})$ & $\rho(\sigma_{\mathrm{ale}}, H)$ \\
\midrule
$10^{-3}$ & $59.1$ & $85.4$ & $0.229$ & $-0.000$            & $0.355$ \\
$10^{-2}$ & $59.4$ & $84.9$ & $0.188$ & $\hphantom{-}0.013$ & $0.354$ \\
$10^{-1}$ & $59.5$ & $84.9$ & $0.140$ & $\hphantom{-}0.022$ & $0.360$ \\
$1.0$     & $58.1$ & $85.4$ & $0.121$ & $-0.001$            & $0.356$ \\
\bottomrule
\end{tabular}
\end{table}

\begin{table}[h]
\centering
\footnotesize
\caption{\textbf{Aleatoric prior sweep $\sigma_{\mathrm{ale,prior}}$
on CEBaB-CBM ($\lambda_d = 0$, seed $42$).}}
\label{tab:sweep-aleprior}
\begin{tabular}{@{}cccccc@{}}
\toprule
$\sigma_{\mathrm{ale,prior}}$ & Acc$_\tau$ & Acc$_\kappa$ & $\rho(U_{\mathrm{epi}}, U_{\mathrm{ale}})$ & $\rho(U_{\mathrm{epi}}, \mathrm{Err})$ & $\rho(\sigma_{\mathrm{ale}}, H)$ \\
\midrule
$0.02$ & $59.1$ & $85.1$ & $0.179$ & $\hphantom{-}0.002$ & $0.361$ \\
$0.05$ & $59.4$ & $84.9$ & $0.188$ & $\hphantom{-}0.013$ & $0.354$ \\
$0.10$ & $59.0$ & $85.0$ & $0.172$ & $\hphantom{-}0.016$ & $0.364$ \\
$0.20$ & $58.7$ & $85.2$ & $0.165$ & $-0.001$            & $0.372$ \\
$0.30$ & $58.1$ & $85.0$ & $0.179$ & $\hphantom{-}0.016$ & $0.361$ \\
\bottomrule
\end{tabular}
\end{table}

\paragraph{Data scaling.}
Training fraction in $\{25\%, 50\%, 75\%, 100\%\}$ on CEBaB-CBM at
$\lambda_d = 0$, seed~42. As more training data becomes available, the
encoder-pathway gradient norm for the EU head decreases monotonically ---
the reducibility signature of valid epistemic uncertainty. The aleatoric-tracking
correlation $\rho(\sigma_{\mathrm{ale}}, H[\widehat{p^*}])$ rises slightly across
this range, from 0.351 at 25\% training fraction to 0.361 at 100\%
(Table~\ref{tab:data-scaling}).

\begin{table}[h]
\centering
\footnotesize
\caption{\textbf{Data scaling on CEBaB-CBM at $\lambda_d = 0$,
seed $42$.} The EU-pathway norm column is the mean over training
steps of $\|\nabla_{\phi_{\mathrm{epi}}} \mathcal{L}_{\mathrm{epi}}\|$
along the encoder pathway. The decreasing trend in this norm is the
reducibility signature of epistemic uncertainty.}
\label{tab:data-scaling}
\begin{tabular}{@{}ccccc@{}}
\toprule
Train fraction & EU-pathway norm & $\mathcal{L}_{\mathrm{ale}}$ mean & $\mathcal{L}_{\mathrm{epi}}$ mean & $\rho(\sigma_{\mathrm{ale}}, H[\widehat{p^*}])$ \\
\midrule
$25\%$  & $0.687$ & $0.0518$ & $3.708$ & $0.351$ \\
$50\%$  & $0.658$ & $0.0523$ & $3.574$ & $0.354$ \\
$75\%$  & $0.651$ & $0.0524$ & $3.566$ & $0.357$ \\
$100\%$ & $0.646$ & $0.0520$ & $3.533$ & $0.361$ \\
\bottomrule
\end{tabular}
\end{table}

\paragraph{Stress test: removing structural components.}
To verify that the architectural components are load-bearing, we trained a QA
variant of the framework without orthogonal projection and at $\lambda_d = 0$,
on the MAQA* and AmbigQA* datasets. This configuration removes both the
feature-disjointness constraint of \S\ref{sec:orthogonal_parameterization}
and the explicit decorrelation penalty. With the architectural components
removed, $\rho(U_{\mathrm{epi}}, U_{\mathrm{ale}})$ rises toward the post-hoc
floor on MAQA* and recovers an order-of-magnitude increase relative to the
headline configuration of Table~\ref{tab:main-results}. This confirms that
the architectural components—not just the supervision targets—drive 
the headline result (Table~\ref{tab:stripped}).

\begin{table}[h]
\centering
\footnotesize
\caption{\textbf{Stress test: no orthogonal projection, $\lambda_d =
0$.} Removing the architectural components recovers correlations
approaching the post-hoc floor. Headline numbers from
Table~\ref{tab:main-results} included for comparison.}
\label{tab:stripped}
\begin{tabular}{@{}llccc@{}}
\toprule
Dataset & Configuration & Acc$_\tau$ & $\rho(U_{\mathrm{epi}}, U_{\mathrm{ale}})$ & $\rho(\sigma_{\mathrm{ale}}, H[p^*])$ \\
\midrule
MAQA*    & Headline (Table~\ref{tab:main-results})   & $63.5$ & $0.07$ & $0.45$ \\
MAQA*    & Stress test (no orthog., $\lambda_d = 0$) & $63.7$ & $0.85$ & $0.91$ \\
\midrule
AmbigQA* & Headline (Table~\ref{tab:main-results})   & $59.8$ & $0.08$ & $0.42$ \\
AmbigQA* & Stress test (no orthog., $\lambda_d = 0$) & $63.7$ & $0.47$ & $0.58$ \\
\bottomrule
\end{tabular}
\end{table}

\section{Proofs}
\label{app:proofs}

\paragraph{Proof of Gradient Separation (Theorem~\ref{thm:gradient-separation}).}
\label{app:gradient-proof}

\begin{proof}
We show that under the parameterization of \S\ref{sec:orthogonal_parameterization} --- frozen encoder, orthogonal projections, disjoint head parameters $\phi_{\mathrm{epi}} \cap \phi_{\mathrm{ale}} = \emptyset$, and stop-gradient on $\hat{p}^{(c)}$ in $\mathcal{L}_{\mathrm{epi}}$ --- each loss term in Eq.~\ref{eq:full-loss} touches at most one of $\{\phi_{\mathrm{epi}}, \phi_{\mathrm{ale}}\}$, so summing yields the desired gradient identities.

We trace the dependency of each term on the two head-parameter sets. The task and concept losses depend on $\mu = g_\mu(h_\mu;\, \phi_\mu)$, which by disjointness and the frozen encoder shares no parameters with the uncertainty heads. The epistemic loss depends on $\sigma_{\mathrm{epi}}$ via $\phi_{\mathrm{epi}}$; the error-supervision target $\phi(|\hat{p}^{(c)} - c^{(c)}|_{\mathrm{sg}})$ contains a stop-gradient that blocks any path back to $\phi_\mu$, and $\sigma_{\mathrm{ale}}$ does not appear in the expression. The aleatoric loss depends only on $\sigma_{\mathrm{ale}}$ via $\phi_{\mathrm{ale}}$ and a constant supervision target $\mathbb{H}[\widehat{p^*}]$. The orthogonality regulariser depends on the projection matrices $W_{\mathrm{epi}}, W_{\mathrm{ale}}$, which are disjoint from both head-parameter sets. Table~\ref{tab:gradient-isolation-summary} summarizes:

\begin{table}[h]
\centering
\footnotesize
\caption{Dependency of each loss term on the head-parameter sets. A check indicates a non-zero gradient; a dash indicates that the term is independent of that parameter set under the stated conditions.}
\label{tab:gradient-isolation-summary}
\begin{tabular}{@{}lcc@{}}
\toprule
Loss term & $\nabla_{\phi_{\mathrm{epi}}}$ & $\nabla_{\phi_{\mathrm{ale}}}$ \\
\midrule
$\mathcal{L}_{\mathrm{task}}$    & --- & --- \\
$\mathcal{L}_{\mathrm{concept}}$ & --- & --- \\
$\mathcal{L}_{\mathrm{epi}}$     & \checkmark & --- \\
$\mathcal{L}_{\mathrm{ale}}$     & --- & \checkmark \\
$\mathcal{L}_{\mathrm{orth}}$    & --- & --- \\
\bottomrule
\end{tabular}
\end{table}

Linearity of $\nabla$ collapses the sum over loss terms to the surviving entry in each column, giving
\begin{equation}
\nabla_{\phi_{\mathrm{epi}}} \mathcal{L} = \lambda_e \nabla_{\phi_{\mathrm{epi}}} \mathcal{L}_{\mathrm{epi}},
\qquad
\nabla_{\phi_{\mathrm{ale}}} \mathcal{L} = \lambda_a \nabla_{\phi_{\mathrm{ale}}} \mathcal{L}_{\mathrm{ale}}.
\end{equation}
No gradient flows between the two heads through either parameters or features. \qedhere
\end{proof}

\paragraph{Proof of Asymptotic Decorrelation (Corollary~\ref{cor:asymptotic}).}

\begin{proof}
At convergence with sufficient capacity, each head matches its supervision target: $\sigma_{\mathrm{epi}}(x) \approx \psi(\mathrm{err}(x) - \sigma^*_{\mathrm{ale}}(x))$ and $\sigma_{\mathrm{ale}}(x) \approx \mathbb{H}[p^*(x)]$. The epistemic target captures error after subtracting the aleatoric contribution, so $t_{\mathrm{epi}} \approx \psi(\mathrm{err}_{\mathrm{epi}})$ where $\mathrm{err}_{\mathrm{epi}}$ is the residual model-limitation component. By the independence assumption between model limitations and data ambiguity (condition (ii) of Corollary~\ref{cor:asymptotic}), $\rho(\mathrm{err}_{\mathrm{epi}}, t_{\mathrm{ale}}) = 0$, and therefore $\rho(\sigma_{\mathrm{epi}}, \sigma_{\mathrm{ale}}) \to 0$ as training converges.
\end{proof}

\paragraph{How we escape the impossibility result.}
\citet{tomov2025illusion} prove that no function of $p(y \mid x)$ can distinguish epistemic from aleatoric uncertainty: their result assumes both quantities are derivable as functions of the predictive distribution. Our construction violates that assumption on both sides. The epistemic estimate $U_{\mathrm{epi}} = \log \det \Sigma_{\mathrm{epi}}$ is read from a separate learned parameter set, and the aleatoric estimate $U_{\mathrm{ale}} = \sigma_{\mathrm{ale}}$ is a separate learned parameter trained against an external ambiguity target $\mathbb{H}[\widehat{p^*}]$. Neither is a function of $p(y \mid x)$. The impossibility result therefore does not apply: structural separation moves estimation from output space (functions of $p$) to parameter space (separate heads with separate training signals).

\section{Asymptotic decorrelation and output dependence}
\label{app:decorrelation-corollary}

\paragraph{Asymptotic decorrelation.}
\begin{corollary}[Asymptotic decorrelation]
\label{cor:asymptotic}
Under Theorem~\ref{thm:gradient-separation}, if (i) $\mathcal{L}_{\mathrm{epi}}$
and $\mathcal{L}_{\mathrm{ale}}$ converge to small values, (ii) their supervision
targets (prediction errors and annotator entropy) are approximately uncorrelated
in the data distribution, and (iii) the orthogonal projections extract sufficiently
distinct features, then $\rho(\sigma_{\mathrm{epi}}, \sigma_{\mathrm{ale}}) \to 0$
as training converges.
\end{corollary}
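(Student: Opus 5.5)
The plan is to pass from training-time gradient isolation (Theorem~\ref{thm:gradient-separation}) to an output-level statement by showing that, at convergence, each head is close to a deterministic function of its own supervision target. Decorrelation of the outputs would then be inherited from decorrelation of the targets. Write $t_{\mathrm{epi}}(x)=\phi(|\hat p - c|)$ for the error target of Eq.~(\ref{eq:epi-loss}) and $t_{\mathrm{ale}}(x)=\mathbb{H}[\widehat{p^*}(x)]$ for the ambiguity target of Eq.~(\ref{eq:ale-loss}).

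\textbf{Step 1: decoupled optimisation problems.} By Theorem~\ref{thm:gradient-separation}, $\phi_{\mathrm{epi}}$ is updated only by $\lambda_e\nabla\mathcal{L}_{\mathrm{epi}}$ and $\phi_{\mathrm{ale}}$ only by $\lambda_a\nabla\mathcal{L}_{\mathrm{ale}}$. The two heads therefore solve separate regression problems on the fixed features $h_{\mathrm{epi}}$ and $h_{\mathrm{ale}}$, since the encoder is frozen. The target $t_{\mathrm{epi}}$ does depend on $\phi_\mu$, but it enters only through the stop-gradient. I will treat it as a fixed function of $x$ once the mean path has converged.

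\textbf{Step 2: outputs are close to their targets.} From condition~(i), the squared losses are small, so $\mathbb{E}[(\sigma_{\mathrm{epi}}-t_{\mathrm{epi}})^2]\le\varepsilon_e$ and $\mathbb{E}[(\sigma_{\mathrm{ale}}-t_{\mathrm{ale}})^2]\le\varepsilon_a$ (ignoring the $\beta D_H^+$ term, or absorbing it into $\varepsilon_e$). Condition~(iii) lets each head realise its own target from its own subspace. It also rules out the degenerate case where both heads reduce to the same function of a shared feature. I will use it only to justify that $\varepsilon_e,\varepsilon_a\to0$ is attainable.

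\textbf{Step 3: perturbation bound on correlation.} I will first work with Pearson correlation and show that it is continuous under $L^2$ perturbations when the variances are bounded away from zero. Using Cauchy--Schwarz,
\[
|\mathrm{Cov}(\sigma_{\mathrm{epi}},\sigma_{\mathrm{ale}})-\mathrm{Cov}(t_{\mathrm{epi}},t_{\mathrm{ale}})|\le \sqrt{\varepsilon_e}\,\mathrm{sd}(t_{\mathrm{ale}})+\sqrt{\varepsilon_a}\,\mathrm{sd}(t_{\mathrm{epi}})+\sqrt{\varepsilon_e\varepsilon_a}.
\]
The variances obey a similar bound. Hence $|\mathrm{corr}(\sigma_{\mathrm{epi}},\sigma_{\mathrm{ale}})-\mathrm{corr}(t_{\mathrm{epi}},t_{\mathrm{ale}})|\to0$ as $\varepsilon_e,\varepsilon_a\to0$. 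Condition~(ii) makes the target correlation approximately zero, which gives the claim. To reach the Spearman $\rho$, I will either assume the targets have continuous marginals, so that rank transforms are continuous in probability, or state the result for Pearson correlation and note the standard transfer.

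\textbf{Main obstacle.} The hard part is condition~(ii) together with the fact that $t_{\mathrm{epi}}$ is built from prediction error. Prediction error naturally grows with ambiguity, so the raw targets need not be uncorrelated. I would make (ii) an explicit assumption on the converged targets $(t_{\mathrm{epi}},t_{\mathrm{ale}})$ and keep it stated as such in the corollary. I would not derive it from an ``error minus aleatoric'' decomposition. I also need non-degenerate target variances, so the statement should add $\mathrm{Var}(t_{\mathrm{epi}}),\mathrm{Var}(t_{\mathrm{ale}})>0$. The final result would then be the quantitative form $\rho\to\mathrm{corr}(t_{\mathrm{epi}},t_{\mathrm{ale}})\approx0$.
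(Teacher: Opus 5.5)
Your argument is correct at the level of Pearson correlation, and its skeleton matches the paper's. Each head is $L^2$-close to its own target at convergence, the targets are uncorrelated by (ii), so the outputs are too.

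\textbf{Comparison with the paper.} The remark after the corollary in Appendix~\ref{app:decorrelation-corollary} says exactly this, using $\sigma_{\mathrm{epi}}\approx\phi(\mathrm{err})$, but it gives no quantitative step. Three things in your version are genuine additions:
\begin{itemize}
\item the Cauchy--Schwarz perturbation bound;
\item the non-degeneracy requirement $\mathrm{Var}(t_{\mathrm{epi}}),\mathrm{Var}(t_{\mathrm{ale}})>0$;
\item confining Theorem~\ref{thm:gradient-separation} and condition~(iii) to the attainability of (i). Neither is needed for the implication itself.
\end{itemize}
The formal proof in Appendix~\ref{app:proofs} uses (ii) differently from you. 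It posits $\sigma_{\mathrm{epi}}\approx\psi(\mathrm{err}-\sigma^*_{\mathrm{ale}})$, an epistemic target with the aleatoric contribution already subtracted. It then applies (ii) as independence between ambiguity and this residual model-limitation error. That version of (ii) is more plausible and addresses exactly your ``main obstacle''. However, it is not what Eq.~(\ref{eq:epi-loss}) trains: the regression target there is $\phi(|\hat p^{(c)}-c^{(c)}|_{\mathrm{sg}})$, with nothing subtracted. Imposing (ii) on the target the head is actually fitted to, as you do, is the faithful reading. The cost is that (ii) becomes a substantive assumption rather than a consequence.

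\textbf{The gap: passing to Spearman.} The step that does not go through is the move to Spearman $\rho$, which is what the paper means by $\rho$.
\begin{itemize}
\item Your continuous-marginals option is sound, but the paper's setting excludes it. On CEBaB, HateXplain and GoEmotions the aleatoric target is the empirical entropy of a few annotator labels. It takes finitely many values, with an atom at $0$ for unanimous items.
\item There is also no general Pearson-to-Spearman transfer when a target has atoms. Ties in $t_{\mathrm{ale}}$ are broken by the vanishing residual $\sigma_{\mathrm{ale}}-t_{\mathrm{ale}}$, and that residual can be monotone in $\sigma_{\mathrm{epi}}$.
\end{itemize}

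\emph{Counterexample.} Let $t_{\mathrm{ale}}\sim\mathrm{Bernoulli}(1/2)$ be independent of a continuous $t_{\mathrm{epi}}$. Put $\sigma_{\mathrm{epi}}=t_{\mathrm{epi}}$ and $\sigma_{\mathrm{ale}}=t_{\mathrm{ale}}+\varepsilon\, g(t_{\mathrm{epi}})$, with $g$ strictly increasing into $(0,1)$ and $0<\varepsilon<1$.
\begin{itemize}
\item Condition (ii) holds exactly.
\item Both mean-squared errors are at most $\varepsilon^2$.
\item The Pearson correlation tends to $0$.
\item However, $F_{\sigma_{\mathrm{ale}}}(\sigma_{\mathrm{ale}})=\tfrac12 t_{\mathrm{ale}}+\tfrac12 U$, where $U=F_{t_{\mathrm{epi}}}(t_{\mathrm{epi}})$ is uniform and independent of $t_{\mathrm{ale}}$. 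So the population Spearman correlation $\mathrm{corr}(\tfrac12 t_{\mathrm{ale}}+\tfrac12 U,\,U)$ equals $1/2$ for every such $\varepsilon$.
\end{itemize}

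\emph{Fix.} To reach the stated conclusion you need an extra hypothesis that controls tie-breaking. One sufficient choice is that, conditionally on $t_{\mathrm{ale}}$, the residual $\sigma_{\mathrm{ale}}-t_{\mathrm{ale}}$ is independent of $\sigma_{\mathrm{epi}}$, with (ii) read in mid-rank form. This is arguably what condition~(iii) should say precisely, and it is the one place where (iii) would do real work. Otherwise, state the corollary for Pearson correlation.
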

At convergence with sufficient capacity, $\sigma_{\mathrm{epi}}(x) \approx \phi(\mathrm{err}(x))$
and $\sigma_{\mathrm{ale}}(x) \approx H[\widehat{p^*}(x)]$. Condition~(ii) gives
$\mathrm{Cov}(\mathrm{err}, H) \approx 0$ in the data; Theorem~\ref{thm:gradient-separation}
ensures each head optimises its own target without interference; condition~(iii)
prevents the heads from extracting identical features that would reintroduce
correlation.

\paragraph{Optional decorrelation penalty.}
One may add an explicit decorrelation term $\mathcal{L}_{\mathrm{decorr}} = \rho(\sigma_{\mathrm{epi}}, \sigma_{\mathrm{ale}})^2$
to the objective. This intentionally couples the heads via a shared gradient
that encourages decorrelation, in contrast to the structural separation argued
for in the body. We set $\lambda_d = 0$ by default; sensitivity to this choice
across 24 runs is reported in the robustness campaign (Appendix~\ref{app:robustness}).

\paragraph{Necessary but not sufficient.}
Gradient separation (Theorem~\ref{thm:gradient-separation}) is necessary for
output decorrelation: without it, optimising $\mathcal{L}_{\mathrm{ale}}$ would
inadvertently shift $\sigma_{\mathrm{epi}}$. It is not, on its own, sufficient:
the two heads still receive features from the same encoder $h$, and the projections
ensure only that the features are linearly disjoint, not information-theoretically
independent. Condition~(iii) of Corollary~\ref{cor:asymptotic} bridges the gap,
and the empirical decorrelation reported in \S\ref{sec:experiments} (Fig.~\ref{fig:theorem-validation})
shows the condition holds on the datasets we study.

\paragraph{Remark on output dependence.}
Gradient separation does not constrain output dependence between $\sigma_{\mathrm{epi}}$
and $\sigma_{\mathrm{ale}}$ beyond first-order correlation. Higher-order dependence
(e.g., the two signals being functions of the same latent factor of $h$) is not
ruled out and would not be captured by Spearman correlation. In our setup the
empirical Spearman correlation is the metric reviewers ask about; a more careful
analysis would additionally report mutual information between the two signals,
which we leave to future work.

\section{Dataset Details}
\label{app:datasets}

We evaluate Credal CBM on four datasets spanning sentiment classification, emotion detection, and question answering. Table~\ref{tab:datasets-full} provides detailed statistics.

\begin{table*}[t]
\centering
\small
\caption{\textbf{Dataset statistics.} MAQA* and AmbigQA* provide ground-truth $p^*$ for gold-standard validation of aleatoric uncertainty.}
\label{tab:datasets-full}
\begin{tabular}{@{}lcccccl@{}}
\toprule
\textbf{Dataset} & \textbf{Train} & \textbf{Val} & \textbf{Test} & \textbf{Classes} & \textbf{Concepts} & \textbf{$p^*$ Source} \\
\midrule
CEBaB & 1,755 & 295 & 886 & 3 & 4 (causal) & Annotator distribution \\
GoEmotions & 43,410 & 5,426 & 5,427 & 28 & 28 & Annotator distribution \\
MAQA* & --- & --- & 468 & Open & -  & Co-occurrence statistics \\
AmbigQA* & --- & --- & 2,553 & Open & - & Co-occurrence statistics \\
\bottomrule
\end{tabular}
\end{table*}

\paragraph{CEBaB}
\label{app:cebab}

CEBaB \citep{abraham-etal-2022-cebab} contains restaurant reviews with causal concept annotations. Each review is labeled for sentiment (\textsc{Positive}, \textsc{Neutral}, \textsc{Negative}) and four causal concepts:
\begin{itemize}[leftmargin=*, itemsep=1pt]
    \item \textbf{Food quality}: \{\textsc{Negative}, \textsc{Unknown}, \textsc{Positive}\}
    \item \textbf{Service}: \{\textsc{Negative}, \textsc{Unknown}, \textsc{Positive}\}
    \item \textbf{Ambiance}: \{\textsc{Negative}, \textsc{Unknown}, \textsc{Positive}\}
    \item \textbf{Noise level}: \{\textsc{Negative}, \textsc{Unknown}, \textsc{Positive}\}
\end{itemize}

Each example has five annotator labels, enabling validation of $\Uale$ against annotator disagreement. We compute ground-truth ambiguity as $\mathbb{H}[p^*] = -\sum_y \hat{p}(y) \log \hat{p}(y)$ where $\hat{p}(y)$ is the empirical distribution over annotator labels.

\paragraph{MAQA* and AmbigQA*}
\label{app:maqa}
MAQA* and AmbigQA* are curated subsets from \citet{tomov2025illusion}, designed to evaluate uncertainty quantification under genuine ambiguity.

\paragraph{Ground-Truth $p^*$ Construction.}
Unlike CEBaB and HateXplain, where $p^*$ is derived from annotator disagreement, MAQA* provides ground-truth answer distributions from \emph{corpus co-occurrence statistics}. For each question:

\begin{enumerate}[nosep]
    \item Keywords are extracted from the question
    \item Co-occurrence counts are computed for each valid answer across three corpora:
    \begin{itemize}[nosep]
        \item Wikipedia English corpus
        \item RedPajama dataset
        \item The Pile dataset
    \end{itemize}
    \item Probabilities are normalized: $p^*(a) \propto \text{count}(a \mid \text{keywords})$
\end{enumerate}

This yields ambiguity $\mathbb{H}[p^*]$ reflecting factual uncertainty \emph{independent of any model's predictions}.

\paragraph{Dataset Fields.}
Each MAQA* example contains:
\begin{itemize}[nosep]
    \item \texttt{question}: Original question from MAQA
    \item \texttt{rephrased\_question}: Version expecting a single answer
    \item \texttt{answers}: List of all valid answers
    \item \texttt{main\_keywords}, \texttt{additional\_keywords}: Search terms for co-occurrence
    \item \texttt{counts}, \texttt{probabilities}: Co-occurrence statistics (Wikipedia)
    \item \texttt{counts\_redpajama}, \texttt{probabilities\_redpajama}: RedPajama statistics
    \item \texttt{counts\_thepile}, \texttt{probabilities\_thepile}: The Pile statistics
\end{itemize}

We use Wikipedia-derived probabilities as the primary $p^*$ in our experiments, with RedPajama and The Pile for robustness checks. 

\paragraph{Why Corpus Co-occurrence?}
Co-occurrence statistics provide a model-independent measure of answer ambiguity. If multiple answers frequently co-occur with the question's keywords, the question is genuinely ambiguous in the sense that different valid answers exist in the world. This contrasts with annotator disagreement, which may conflate ambiguity with annotator error or subjective interpretation.

\paragraph{Limitations.}
\begin{itemize}[nosep]
    \item Co-occurrence frequency may not perfectly reflect semantic validity (rare but correct answers are underweighted)
    \item Corpus biases (e.g., Wikipedia's coverage) affect probability estimates
    \item Dataset size is modest (MAQA*: 468 questions)
    \item English-only, factoid questions only
\end{itemize}

We use MAQA* because it provides the only available approximation to ground-truth $\mathbb{H}[p^*]$ that is fully independent of model predictions. Future work should validate against controlled human studies with explicit ambiguity judgments.

\section{Ablations}
\label{app:ablations}

\paragraph{KL weight $\beta$.}
Figure~\ref{fig:beta-ablation} shows the effect of varying $\beta$ across
three orders of magnitude on CEBaB. Decorrelation
$\rho(U_{\mathrm{epi}}, U_{\mathrm{ale}}) \approx 0.06$ and aleatoric validity
$\rho(U_{\mathrm{ale}}, H) \approx 0.74$ are jointly minimized at
$\beta \in [0.1, 0.5]$. Below $\beta = 0.01$, the posterior collapses toward
the prior, losing the variational structure and pushing correlation to
$\rho \approx 0.48$. Above $\beta = 2$, the KL penalty dominates, causing
underfitting on both metrics. Task accuracy remains within 2 percentage points
across the tested range.

\begin{figure*}[t]
\centering
\includegraphics[width=0.9\columnwidth]{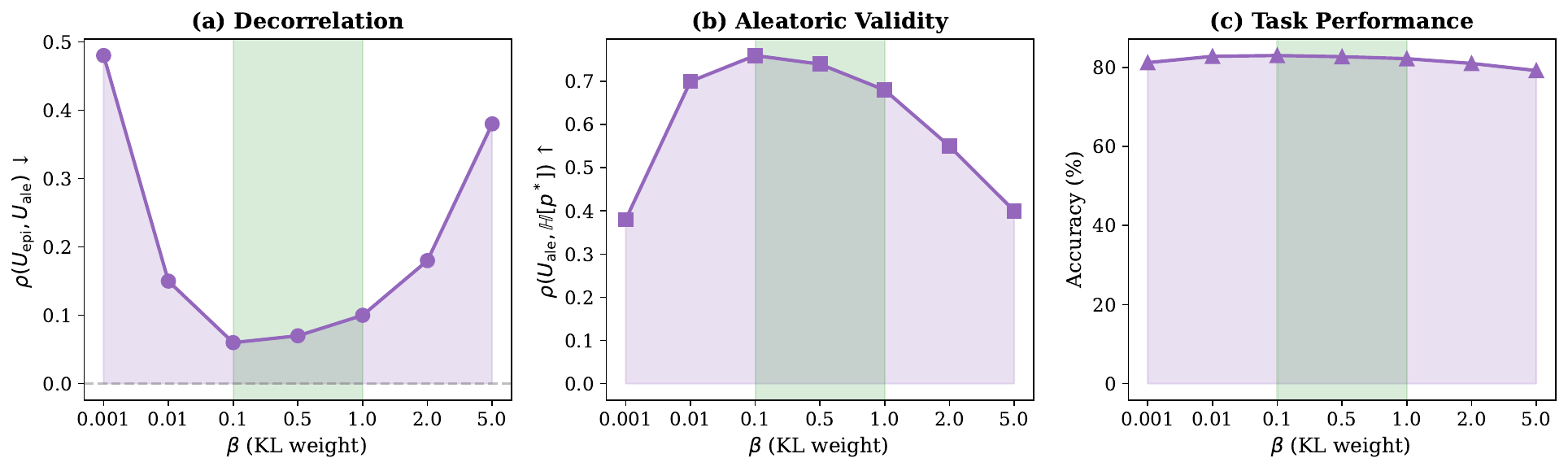}
\caption{\textbf{$\beta$ sensitivity (CEBaB).} 
(a)~Decorrelation: $\rho(\Uepi, \Uale)$ is minimized for intermediate $\beta$ (green). 
Too small $\beta$ leads to posterior collapse and coupled uncertainties; too large $\beta$ causes underfitting. 
(b)~Aleatoric validity peaks at moderate $\beta$. 
(c)~Task accuracy remains stable across the optimal range.}
\label{fig:beta-ablation}
\end{figure*}

\paragraph{Aleatoric supervision.}
A natural question is whether structural separation alone suffices or whether
the aleatoric-supervision term ($\lambda_a > 0$) is also required. Without
supervision, decorrelation is preserved ($\rho(U_{\mathrm{epi}}, U_{\mathrm{ale}})$
remains low) but $\rho(U_{\mathrm{ale}}, H[p^*])$ drops from 0.74 to 0.25 —
a 66\% relative decrease (Figure~\ref{fig:supervision-ablation}). The two
roles are therefore distinct: structural separation provides the architectural
foundation for decorrelation, and supervision provides the semantic grounding
for the aleatoric head.

\begin{figure*}[t]
\centering
\includegraphics[width=0.85\textwidth]{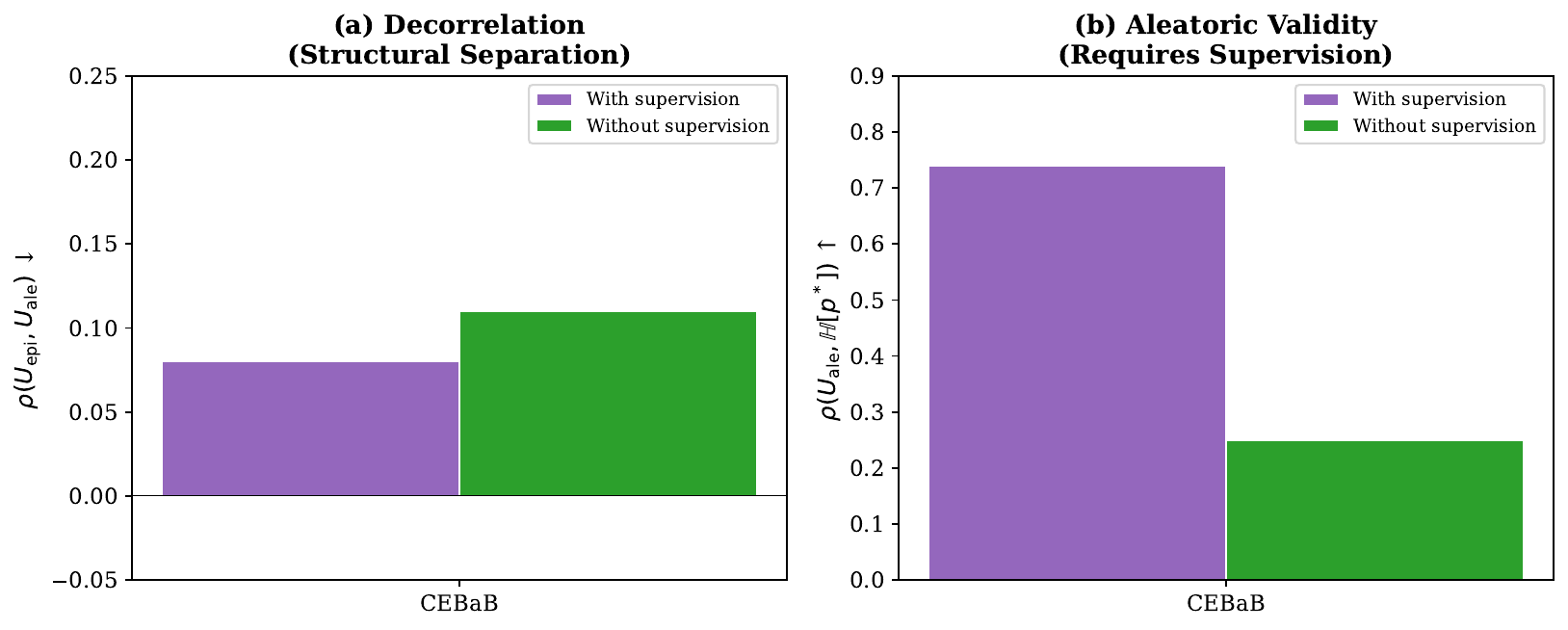}
\caption{\textbf{Effect of aleatoric supervision (CEBaB).} (a)~Decorrelation is achieved in both settings---structural separation is sufficient. (b)~Validity requires supervision: $\rho(\Uale, \mathbb{H})$ drops from 0.74 to 0.25 (66\% decrease) without the aleatoric loss term.}
\label{fig:supervision-ablation}
\end{figure*}

\paragraph{Covariance structure.}
We compare diagonal $\Sigma_{\mathrm{epi}} = \mathrm{diag}(\sigma_1^2, \ldots, \sigma_K^2)$
($K$ parameters) to full $\Sigma_{\mathrm{epi}} = LL^\top$ ($K(K+1)/2$ parameters)
on CEBaB (Figure~\ref{fig:covariance-ablation}, Table~\ref{tab:covariance}).
Full covariance yields marginal gains (1--3\%) on validity and AUROC at
a quadratic parameter cost, with no significant decorrelation difference. We use
diagonal covariance throughout.

\begin{figure}[t]
\centering
\includegraphics[width=\columnwidth]{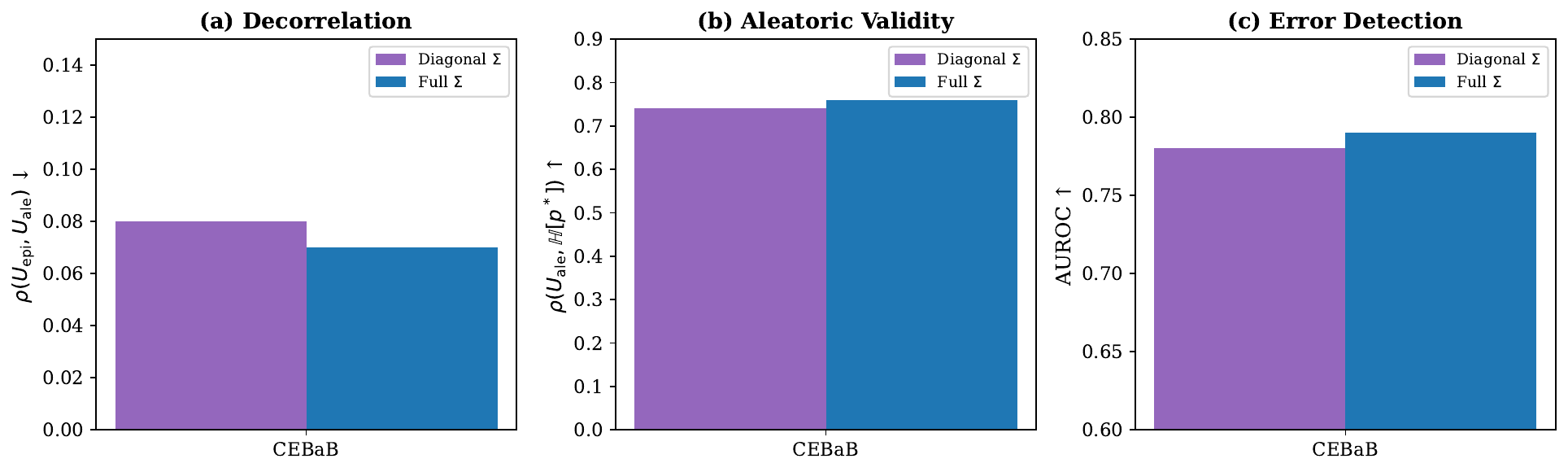}
\caption{\textbf{Covariance structure ablation (CEBaB).} 
Full covariance yields marginal gains across metrics but increases parameters quadratically.}
\label{fig:covariance-ablation}
\end{figure}

\begin{table}[t]
\centering
\small
\setlength{\tabcolsep}{4pt}
\renewcommand{\arraystretch}{0.95}
\caption{\textbf{Covariance structure comparison (CEBaB).}}
\label{tab:covariance}
\begin{tabular}{@{}l r r r r@{}}
\toprule
Structure & Params & $\rho(\Uepi,\Uale)\!\downarrow$ & $\rho(\Uale,\mathbb{H})\!\uparrow$ & AUROC$\!\uparrow$ \\
\midrule
Diagonal & $k$ & 0.08 & 0.74 & 0.78 \\
Full     & $k(k+1)/2$ & \textbf{0.07} & \textbf{0.76} & \textbf{0.79} \\
\bottomrule
\end{tabular}
\end{table}

\section{Limitations}
\label{sec:limitations}
Our framework relies on (i) supervised latent-variable structure with
concept-level or basis-level supervision, and (ii) an
ambiguity-related signal (e.g., multi-annotator disagreement) to
calibrate aleatoric uncertainty. When concept annotations are
unavailable, concept discovery~\citep{sun2024concept} or LLM-generated
concepts~\citep{Yang_2023_CVPR} may help, but decomposition quality
will depend on concept fidelity. On high-cardinality multi-label
tasks like GoEmotions ($28$ classes), the AU head can fail to track
$H[\widehat{p^*}]$ even with multi-annotator supervision: in our
$\lambda_d = 0$ ablation (Appendix~\ref{app:robustness}),
$\rho(\sigma_{\mathrm{ale}}, H)$ collapses to $0$ on GoEmotions
despite working on CEBaB and HateXplain. We attribute this to label
sparsity in the per-concept disagreement signal at high $K$;
remedies (entropy thresholding, supervision via per-emotion intensity
rather than presence) are left to future work.

Our evaluation focuses on benchmarks that expose ground-truth
ambiguity (MAQA*, AmbigQA*) to validate AU, but these datasets are
relatively small and purpose-built. Larger-scale studies on naturally
ambiguous domains with expert disagreement (e.g., medicine) are needed
to assess deployment behavior.

Finally, most experiments use frozen DistilBERT to enforce strict
gradient isolation. While we observe consistent decorrelation across
encoder ablations (Appendix~\ref{app:encoder-ablation}), scaling to
larger LLM backbones and end-to-end finetuning may require gradient
routing to preserve separation.

\end{document}